\documentclass{article}
\usepackage[preprint]{neurips_2026}
\usepackage[utf8]{inputenc}
\usepackage[T1]{fontenc}
\usepackage{amsmath,amssymb,amsthm}
\usepackage{booktabs}
\usepackage{graphicx}
\usepackage{float}
\usepackage{hyperref}
\usepackage{url}
\usepackage{microtype}
\usepackage{xcolor}

\newtheorem{theorem}{Theorem}
\newtheorem{lemma}[theorem]{Lemma}
\newtheorem{corollary}[theorem]{Corollary}
\newtheorem{proposition}[theorem]{Proposition}
\newtheorem{remark}[theorem]{Remark}

\title{Transformers Can Learn Rules They've Never Seen:\\Proof of Computation Beyond Interpolation}

\author{
  Andy Gray \\
  Kortical
}

\begin{document}

\maketitle

\begin{abstract}
A central question in the debate over large language models is whether
transformers can learn rules they have never seen, or whether they can only
interpolate: predict new cases from their similarity to training examples. We
test this in a controlled setting where interpolation provably fails, so
success can only come from computation beyond interpolation. We train small
transformers to predict the rollout of a cellular automaton whose update rule
is pure XOR, and remove one entry of the rule's truth table from all direct
supervision. The missing entry's output is never shown to the model; its only
trace is indirect, as wrong values corrupt visible predictions at later
timesteps. Because XOR parity flips whenever one input bit is changed, every
one-bit neighbour of the missing entry carries the opposite label, and we
prove that similarity-based predictors, including nearest-neighbour, kernel,
and Gaussian-process methods, are forced to the wrong answer. A two-layer
transformer can nevertheless recover the missing entry, and circuit
extraction confirms it computes XOR exactly. Ablations show the recovery
depends on gradient signal propagating through multi-step prediction, and a
second, structurally unrelated benchmark on symbolic operator chains exhibits
the same capacity under ordinary autoregressive training. Together with a
constructive proof that a standard transformer block can implement exact
local Boolean rules, these results provide an existence proof that
transformers can learn rule structure not directly observed in training and
express it explicitly. This rules out the strongest architectural form of the
interpolation-only account, the claim that transformers cannot in principle
discover and communicate unseen rules, while leaving open when such behaviour
arises in large-scale language training.
\end{abstract}

\section{Introduction}

A persistent question in the debate over large language models is whether
transformers can learn rules they have never seen, or whether their apparent
generalisation reduces to interpolation: predicting new cases from their
similarity to training examples
\citep{bender2021dangers,chollet2019measure,belkin2021fit,mirzadeh2024gsm}.
The question is hard to settle at scale. Training corpora are vast and opaque,
so when a model produces an apparently novel result it is rarely possible to
rule out that something similar appeared somewhere in its data. We therefore
approach the question from the opposite direction: we construct a small, fully
controlled setting in which similarity-based interpolation provably fails, and
ask whether a standard transformer trained by gradient descent succeeds
anyway.

The construction is simple to state. We train a two-layer transformer to
predict the step-by-step evolution of a one-dimensional cellular automaton: a
row of binary cells, each updated by a fixed local rule of its neighbourhood.
The rule is pure XOR, so a cell's next value is the parity of its
neighbourhood. We then remove one entry of the rule's truth table from all
direct supervision, masking the training loss at every position and timestep
where that input pattern occurs; we call accuracy at these positions
\emph{holdout accuracy}, and accuracy elsewhere \emph{visible accuracy}. The
model never observes the missing entry's output. The entry still leaves a
trace, however: predicting the wrong value for it corrupts predictions at
later timesteps, at positions that are supervised.

XOR is what makes the test sharp. Parity flips whenever any single bit flips,
so every nearest neighbour of the missing pattern carries the opposite label.
Similarity-based interpolation is then not merely unreliable but provably
wrong: nearest-neighbour, kernel, Gaussian-process, and tree-ensemble
predictors are all forced to the incorrect output
(Results~\ref{thm:monotone}--\ref{lem:rf}). One could always hand-design a
representation in which the missing entry becomes interpolable, but
constructing that representation already requires knowing the answer. Our
claim is therefore scoped to similarity in the data as actually given,
including embeddings learned end-to-end, which at a fixed position preserve
the input's neighbourhood geometry (Proposition~\ref{prop:hamming},
Corollary~\ref{cor:embedding_gap}) and so cannot supply the answer by
similarity.

A two-layer transformer nevertheless recovers the missing entry, at up to
100\% holdout accuracy, and circuit extraction recovers an exact XOR
computation from its weights. Ablations locate the mechanism in the training
signal. We train the model to predict several consecutive timesteps and
compare three regimes: \emph{soft unrolling}, where the predicted
probabilities at each step are fed back as the input to the next, so gradients
flow through the entire rollout; \emph{hard unrolling}, where predictions are
first rounded to binary states and gradients pass through a straight-through
estimator; and \emph{no unrolling}, where the model predicts a single step.
Recovery of the missing entry is reliable under soft unrolling, rare under
hard unrolling, and absent without unrolling (Section~\ref{sec:results}), so
the indirect downstream signal is what carries the missing entry into the
model. Success also tracks how much visible evidence the data provides. Rules
with wider neighbourhoods, which leave more supervised patterns visible, are
learned faster and more reliably, and holdout accuracy stays at its baseline
until visible accuracy is high, then rises sharply, a phase transition in the
onset of indirect learning (Section~\ref{sec:results},
Appendix~\ref{app:ksweep}). A second
benchmark, chains of symbolic operators over integers with one operator pair
held out entirely, reproduces the phenomenon under ordinary autoregressive
training: the model infers the held-out composition, exceeds every
interpolation baseline, and writes the operators it has discovered as explicit
symbols in its output (Section~\ref{sec:symbolic}).

\textbf{Why minimal models.} We deliberately use small transformers and
synthetic data because this eliminates the confounds that make definitive
claims impossible at scale: data contamination, memorisation, and opaque
corpora. The question is not whether LLMs compute XOR; it is whether the
transformer architecture \emph{can} learn a rule entry absent from its direct
supervision. Minimal models answer this cleanly, and the mechanism we identify
requires only attention and feedforward layers, the same components present in
every deployed LLM. Appendix~\ref{app:explicit_local_boolean_block} makes the
capacity concrete with an explicit construction of any radius-1 Boolean rule
in a single standard transformer block, and
Appendix~\ref{app:compositional_local_rule_circuits} shows such primitives
compose across depth. Establishing the capacity in a setting where the
similarity-based alternative is provably excluded is stronger evidence than
suggestive results at scale, where that alternative can never be fully ruled
out.


\section{Related Work}

\textbf{Cellular automata learning.} Prior work shows that neural networks can learn CA dynamics, typically in-distribution or across rules \citep{gilpin2019cellular}. \citet{ca_constraint_reconstruction2020} reconstructed CA rules from nonconsecutive observations via constraint satisfaction, and \citet{elser_lal_2026_btf} generalized this into a projection-based Boolean threshold learning framework. Our setting instead asks whether a standard transformer trained by gradient descent can recover missing truth-table entries.

\textbf{Algorithmic generalisation.} \citet{anil2022exploring} and \citet{deletang2022neural} showed that transformers fail to generalise across sequence lengths, a finding widely cited as evidence of fundamental algorithmic limitations. We do not contest the specific finding---transformers do fail at length generalisation. However, the broad conclusion is falsified by our experiment: a standard transformer computes a linearly inseparable function entirely absent from its training data. Length generalisation failure reflects a length-specific limitation (position encodings, sequence structure), not a computational one.

\textbf{The interpolation debate.} \citet{bender2021dangers} argue that LLMs are ``stochastic parrots'' that recombine observed patterns. Our experiment provides a controlled counterexample to the strongest architectural form of that view: the target pattern is never observed during training, yet the transformer still recovers it at up to 100\%. We do not dispute that transformers often default to pattern matching; our result shows that the architecture is not limited to it.

\citet{chollet2019measure} distinguishes ``local generalisation'' from genuine computation; we provide the first clean empirical demonstration of this boundary. XOR's linear inseparability means no amount of local interpolation recovers the hidden pattern (0\% across all methods), yet the transformer crosses to genuine computation with a mathematical guarantee.

\citet{belkin2021fit} shows generalisation reduces to interpolation for smooth functions; XOR is discontinuous and linearly inseparable, so no smooth manifold connects training patterns to the hidden one. Kernel methods, the mathematical backbone of Belkin's framework, provably achieve 0\% (Theorems~\ref{thm:cm}--\ref{thm:gp}).

\citet{mirzadeh2024gsm} argue LLM reasoning is fragile pattern matching; our transformer computes a function it was never trained on, verified by circuit extraction. \citet{dziri2023faith} conclude transformers reduce compositional reasoning to ``linearised subgraph matching''; the hidden pattern has no subgraph to match against, and the extracted XOR polynomial with nonlinear interaction terms proves genuine computation.

\textbf{Attention as smoothing.} \citet{tsai2019transformer} interpret self-attention as a kernel smoother over value vectors. Our appendix result is complementary: while attention supplies smooth local routing, a standard attention+ReLU transformer block already implements exact local Boolean rules, so the full block is not merely a smoothing operator.

\textbf{Mechanistic interpretability.} We build on \citet{nanda2023progress}, who reverse-engineered Fourier circuits for modular addition, and \citet{power2022grokking,zhang2024initialization,zhou2023length}. Our work extends Nanda et al.\ in four ways: (i)~a linearly inseparable target function, eliminating the criticism that learned trig is indistinguishable from smooth interpolation; (ii)~indirect supervision---the model never sees hidden pattern outputs, inferring them through constraint propagation; (iii)~three-level representation tracing (input 0\% $\to$ embeddings 51\% $\to$ final layer 98\%), providing a more complete mechanistic story; and (iv)~a directly interpretable XOR polynomial (zero fit error, explicit interaction terms $LC$, $LR$, $CR$, $LCR$) rather than Fourier features. The result is a definitive proof of computation rather than a suggestive one.

\section{Experimental Setup}
\label{sec:setup}

\subsection{Cellular automata as testbed}

A one-dimensional CA is a binary grid of width $W$ evolving with periodic boundary conditions: at each timestep, every cell's next value is given by a fixed rule applied to the $2r+1$ cells centred on it, where $r$ is the neighbourhood \emph{radius}. For $r=1$ the neighbourhood is 3 bits, the left neighbour $L$, centre cell $C$, and right neighbour $R$, giving $2^3 = 8$ possible input patterns; $r=2$ uses nearest neighbours $L_1, R_1$ and second neighbours $L_2, R_2$. We test five rules, written with $\oplus$ (XOR), $\lor$ (OR), and $\land$ (AND): \textbf{Rule~150} (radius~1, pure 3-way XOR: $L \oplus C \oplus R$), \textbf{Rule~30} ($L \oplus (C \lor R)$), \textbf{Rule~106}, \textbf{Rule~D} (radius~2, $L_1 \oplus ((C \lor R_1) \land (L_2 \lor R_2))$), and \textbf{Rule~G} (radius~2, $(L_1 \oplus (C \lor R_1)) \lor (L_2 \land R_2)$).

\begin{figure}[htbp]
  \centering
  \includegraphics[width=0.7\textwidth]{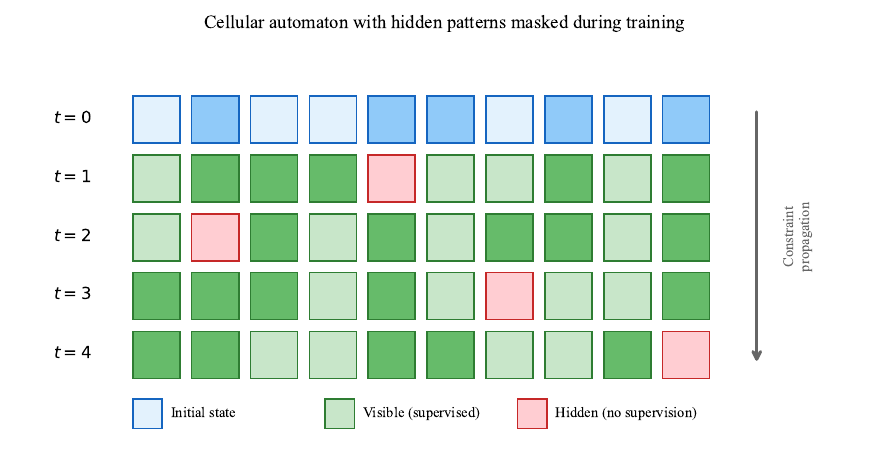}
  \caption{Experimental setup. A CA evolves from a random initial state ($t=0$). At each subsequent timestep, positions where hidden patterns occur (red) receive no supervision. Visible positions (green) provide training signal. Wrong hidden-pattern predictions at $t+1$ cascade into errors at visible positions at $t+2$, providing indirect gradient signal.}
  \label{fig:setup}
\end{figure}

\subsection{Hard-gap setup}

We select $k$ input patterns as ``hidden'' (Figure~\ref{fig:setup}). During training, outputs at positions where hidden patterns occur are masked at \emph{every} timestep; the model receives zero direct supervision on hidden pattern outputs. The model must infer hidden outputs through constraint propagation: wrong predictions at hidden positions cascade into errors at visible downstream positions ($t+2$ onward), generating indirect gradient signal.

\subsection{Why XOR provides a mathematical guarantee}

Pure XOR has the property that every input bit flip changes the output. For a held-out pattern, every nearest neighbour has the opposite label. We prove this defeats all standard interpolation methods.

Let $f: \{0,1\}^n \to \{0,1\}$ be pure XOR, with labels $y(x) := (-1)^{f(x)}$. Fix held-out $p$; training set $T := \{0,1\}^n \setminus \{p\}$.

\textbf{Parity-distance identity.} For any $q \in \{0,1\}^n$: $y(q) = y(p) \cdot (-1)^{d(q,p)}$, where $d(q,p)$ is the Hamming distance, the number of bit positions at which $q$ and $p$ differ. Odd-distance points have label $-y(p)$, even-distance points have label $y(p)$. All $\ell_m$ distances with finite $m$ on $\{0,1\}^n$ are monotone functions of Hamming distance.

\paragraph{Result 1: interpolation lower bounds.}
For held-out parity / Rule~150, all interpolation baselines tested here are forced away from the correct label: $k$-nearest neighbours (KNN; majority vote over the $k$ closest training points), similarity voting with any non-increasing distance weights, kernel predictors whose weights decay monotonically with Hamming distance, including the radial basis function (RBF) Gaussian kernel, Gaussian-process (GP) and kernel ridge regression (KRR), and axis-aligned decision trees along with Random Forests (RF) built from them. Appendix~\ref{app:proofs} gives the formal theorem statements and proofs. The guarantee is specific to pure XOR / Rule~150; Rule~D and Rule~G are included as structured comparison rules.

\subsection{Architecture and training}

The model is a standard post-LN transformer encoder: a scalar embedding ($1 \to 64$) plus learned absolute position embeddings, 2 layers, 4 heads, and a ReLU FFN of dimension 128, with a linear projection ($64 \to 1$) as output. Training uses the soft, hard, or no unrolling regimes defined in Section~1, unrolled over $s = 4$ consecutive timesteps, with gradients flowing through all unrolled steps. The loss is binary cross-entropy on visible positions only, with masks computed from ground-truth binary states. We use width 101, i.i.d.\ Bernoulli(1/2) initial rows, 20K training and 2K test samples, batch size 128, Adam with learning rate $10^{-3}$, and 10 seeds.

\section{Results}
\label{sec:results}

\subsection{Interpolation fails at every representation level}

\begin{figure}[htbp]
  \centering
  \includegraphics[width=0.7\textwidth]{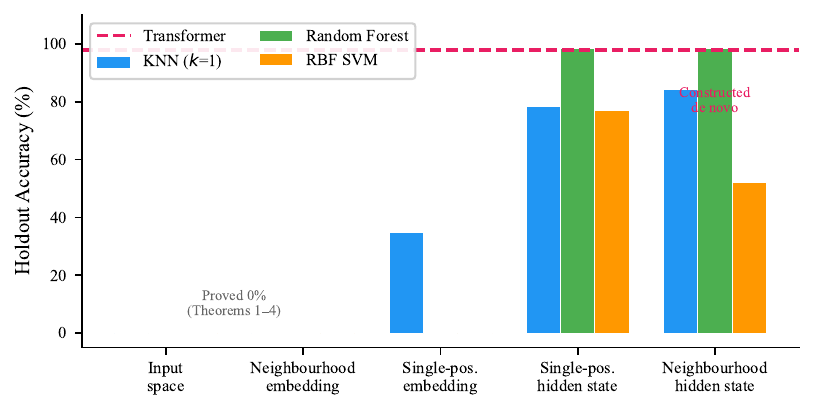}
  \caption{Interpolation accuracy across five representation levels for Rule~150. Input space: provably 0\% (Results~\ref{thm:monotone}--\ref{lem:rf}). Neighbourhood embeddings: empirically 0\%, with a fixed-position proof extension in Appendix~\ref{app:embedding_extension}. After transformer layers: 94--100\%. The XOR function is constructed de novo by computation.}
  \label{fig:interpolation}
\end{figure}

Table~\ref{tab:interpolation} and Figure~\ref{fig:interpolation} show interpolation accuracy at five levels, from raw input to post-transformer hidden states. In input space, all methods achieve 0\%, guaranteed by Results~\ref{thm:monotone}--\ref{lem:rf}. In pooled neighbourhood embeddings, all methods also achieve 0\% empirically. The Rule~150 interpolation-impossibility result extends to raw neighbourhood embeddings after conditioning on centre location (Appendix~\ref{app:embedding_extension}). After the transformer layers, Random Forests decode holdout patterns at 94--100\% from single-position hidden states. The function does not exist in the input or embedding; it is constructed by the transformer. High decodability from hidden states is a consequence of this computation, not an alternative explanation: the representation is absent pre-transformer and constructed de novo by the network.

\begin{table}[htbp]
\centering
\caption{Interpolation accuracy (\%) across five representation levels for Rule 150. Results summarize the six transformer runs achieving $\geq$95\% holdout accuracy.}
\label{tab:interpolation}
\small
\begin{tabular}{lccccc}
\toprule
Representation Level & KNN & GP & RBF SVM & RF & Transformer \\
\midrule
Input space (3-bit) & 0 & 0 & 0 & 0 & 96.8 \\
Nbhd.\ embedding (192-d) & 0 & 0 & 0 & 0 & 96.8 \\
Single-pos.\ embedding (64-d) & 34--42 & 5--27 & 0 & 0 & 96.8 \\
Single-pos.\ hidden (64-d) & 0--100 & 0--100 & 0--100 & 94--100 & 96.8 \\
Nbhd.\ hidden (192-d) & 72--100 & 0--100 & 0--100 & 95--100 & 96.8 \\
\bottomrule
\end{tabular}
\end{table}

A temporal baseline giving Random Forests the same inputs and supervision as the transformer achieves 100\% on supervised patterns but 0.2--0.4\% on hidden ones. XOR defeats interpolation regardless of temporal data.

\subsection{Main results}

\begin{table}[htbp]
\centering
\caption{Mean holdout accuracy across seeds (95\% CI). Outcomes are bimodal; the mean reflects the fraction of seeds that succeed.}
\label{tab:main}
\small
\begin{tabular}{lccr}
\toprule
Experiment & Mean & 95\% CI & Successes \\
\midrule
Rule D $k$=8 (soft) & 96.7\% & [94.1, 99.2] & 10/10 \\
Rule G $k$=8 (soft) & 99.5\% & [99.2, 99.7] & 10/10 \\
Rule D $k$=8 (hard/STE) & 65.5\% & [59.2, 71.9] & 1/10 \\
Rule 150 (soft, 220 ep.; initially failed patterns) & 71.5\% & [61.6, 81.4] & 47/60 \\
Rule 150 (soft, 50 ep.; all 8 patterns) & 10.6\% & [4.1, 17.1] & 8/80 \\
Rule D $k$=8, no unrolling (1 step) & 63.1\% & [62.7, 63.5] & 0/10 \\
Rule D $k$=8, leaky mask ($t$+1 only) & 99.3\% & [99.1, 99.6] & 10/10 \\
\bottomrule
\end{tabular}
\end{table}

Table~\ref{tab:main} shows the core results. For Rule~D $k$=8, soft unrolling dramatically outperforms hard/STE unrolling: 96.7\% vs 65.5\% ($p = 0.002$, Wilcoxon). Hard unrolling can nevertheless succeed alone: on Rule~30 it fully recovers held-out patterns on some seeds (Appendix~\ref{app:crossrule}, Table~\ref{tab:rule30_hard}). Multi-step structure alone is insufficient; the model needs differentiable signal propagation. Using the most common output from the hidden patterns as a constant, accuracy is 62.5\%, matching accuracy without unrolling (63.1\%). The leaky-mask control masks hidden positions only at $t+1$, so later direct supervision largely removes the hard-gap constraint.

\subsection{Constraint propagation is provably sufficient}

A GF(2) constraint solver confirms the multi-step rollout uniquely determines hidden outputs. With one timestep, identifiability is 0\%. With two timesteps, it is 100\% ($\sim$31 equations for $\sim$12.5 unknowns, full rank). The information exists; the question is whether the transformer can find it.

\subsection{Circuit analysis confirms XOR computation}

For Rule~150 models achieving 100\% holdout accuracy (2 models, independently trained), we fit the complete degree-3 polynomial over the 3-bit input. The interaction terms ($LC$, $LR$, $CR$, $LCR$) have large coefficients ($-25.3$ to $+46.1$), confirming the model computes a function with the nonlinear structure of XOR, not a linear or additive approximation. We note that multiple circuits can implement the same function \citep{mi_identifiability2025}; our claim is functional (the model computes XOR) rather than structural (this is the unique circuit).

Layer ablation: zeroing layer~0 drops accuracy by 79--87\%; zeroing layer~1 drops it by 100\%. The logit lens shows XOR is undecodable at the embedding and after layer~0 (0--3\%), then appears at 100\% after layer~1. Linear probes confirm the routing-plus-computation decomposition: neighbour bits become partially decodable after layer~0, while XOR jumps from chance to 98.4\% only after layer~1. Appendix~\ref{app:explicit_local_boolean_block} shows that a standard transformer block already contains an exact local-rule circuit with the same division of labour. Full circuit analysis in Appendix~\ref{app:circuits}.

\subsection{Phase transition and constraint density}

In pure XOR (Rule~150), where interpolation is provably impossible, holdout accuracy stays near 0\% until supervised accuracy exceeds $\sim$85\%, then rises sharply; full training dynamics are shown in Appendix~\ref{app:grokking}. In rules with internal structure that permits partial interpolation, holdout can rise earlier, but the same grokking-like transition to high holdout accuracy still appears around a similar threshold. Below this threshold, predictions are too noisy for wrong hidden outputs to produce detectable downstream errors.

Wider neighbourhoods help: Rule~D (radius~2, 32 total patterns, $k$=8 hidden) reaches 96.7\% (10/10 seeds), while Rule~150 (radius~1, 8 total patterns, $k$=1 hidden) reaches only 10.6\% in 50 epochs (8/80). The key factor is not the number of hidden patterns but the number of \emph{visible} ones: 24 visible constraints in radius-2 vs.\ 7 in radius-1. The $k$-sweep within a single rule confirms this: hiding \emph{more} patterns within Rule~D makes learning strictly harder, with a sharp cliff between 44\% and 50\% hidden (Appendix~\ref{app:ksweep}, Figure~\ref{fig:k_sweep}). Rule~G, whose OR-decomposable structure creates richer inter-pattern constraints, tolerates 75\% hidden at 97.1\% accuracy. A Conv1D with local receptive fields achieves 100\%, serving as a ceiling; the transformer reaches 96.7\% without built-in locality.

\subsection{Symbolic operator benchmark}
\label{sec:symbolic}

As an external-validity check on whether the capacity demonstrated above is specific to cellular automata, we train an encoder-decoder transformer (same depth: 2 layers, 4 heads, 64-dim, ${\sim}180$K parameters) on compositional chains of two binary operators over 6-bit integers. The input is five tokens $(a,b,c,d,u)$, where $(a\,\text{op}_1\,b)\,\text{op}_2\,c=d$ identifies the operator pair; the decoder emits $a\,\text{op}_1\,b=e;\ e\,\text{op}_2\,u=f$ plus the operator identities. Seven operators are used (XOR, OR, AND, NOR, NAND, LSHIFT, RSHIFT); one operator pair is held out entirely from training (48 of 49 pairs seen). We test all 49 holdout pairs (Appendix~\ref{app:symbolic}, Figure~\ref{fig:coverage}); four are analysed in detail below. We select the best checkpoint by peak eval-set performance and report its accuracy on a separate held-out test set of 500 examples per operator pair.

Table~\ref{tab:symbolic} shows holdout accuracy across four held-out pairs under three conditions. Three findings parallel the CA results. \textbf{(i)}~All holdout pairs exceed baseline accuracy (KNN and MLP score 0\%; KRR reaches 0--18.2\%); KRR uses the numeric inputs directly while the transformer discovers the operator identity and outputs it symbolically, a qualitatively different kind of generalisation. \textbf{(ii)}~Replacing operator symbols with arbitrary letters (``opaque'') leaves accuracy comparable on three pairs, and improves it on \texttt{\&L} ($+10.6$ points, consistent with seed fluctuation): arbitrary relabelling does not collapse the result. All embeddings are learned from scratch. \textbf{(iii)}~Removing intermediate computation steps (``label-only'') roughly halves accuracy, paralleling the CA finding that soft unrolling (96.7\%) far exceeds no unrolling (63.1\%): multi-step structure provides the constraint propagation channel. Further mechanistic details (logit lens, cross-attention maps, head ablation) are in Appendix~\ref{app:symbolic}.

\textbf{Causal substitution test.} To verify the model has learned the causal structure of the computation, we perform targeted input substitutions (Figure~\ref{fig:substitution}). Starting from 200 unambiguous \texttt{\^{}|} examples, we replace inputs $(c, d)$ with values consistent with a different second operator while holding $(a, b)$ fixed, or vice versa. The model correctly identifies the substituted operator 77--100\% of the time across all 7 target operators, while preserving the untouched operator (94--100\% when swapping $a,b$). This double dissociation confirms the model has learned which inputs determine which operator: not a surface correlation but the correct causal graph.

\begin{figure}[htbp]
  \centering
  \includegraphics[width=0.9\textwidth]{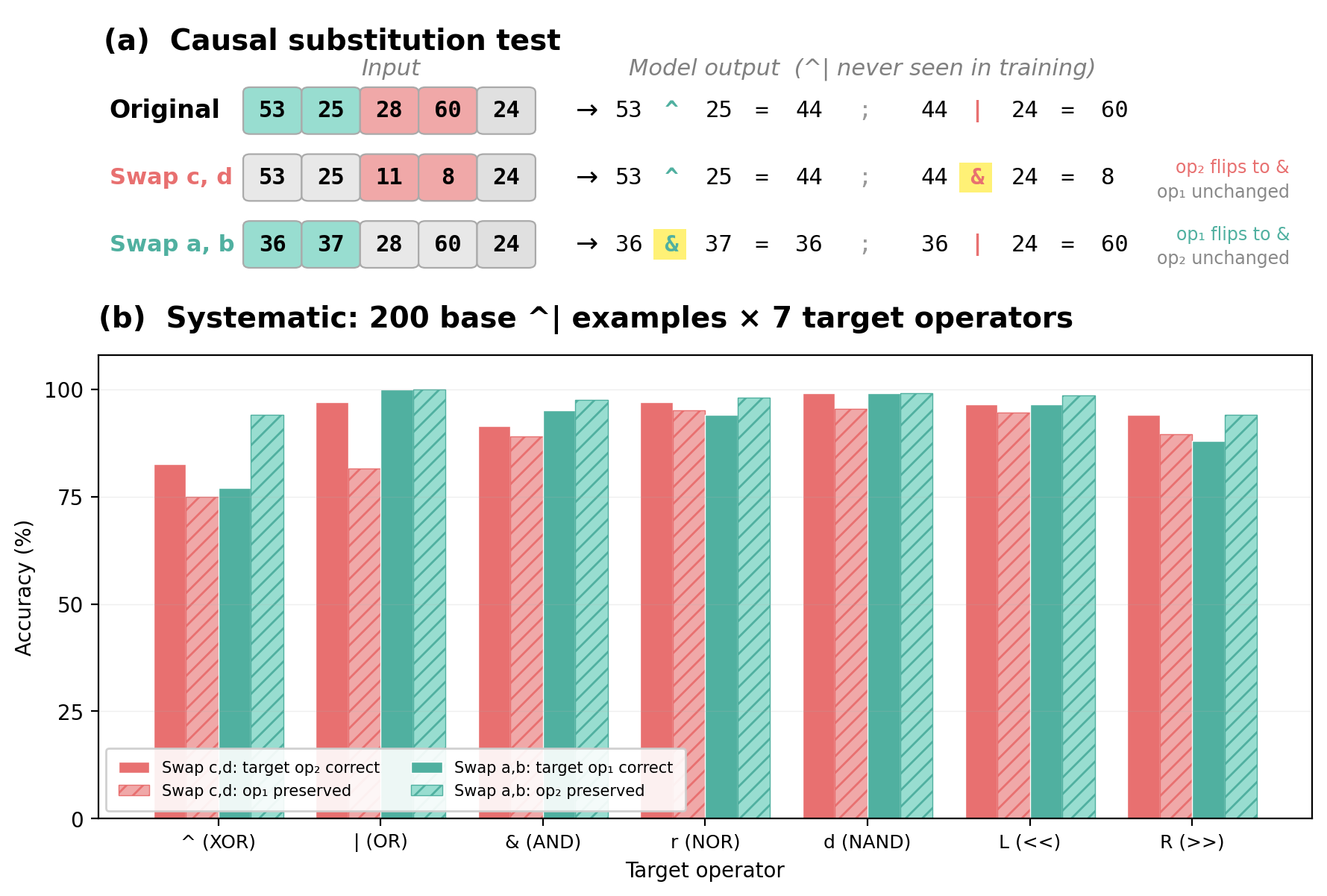}
  \vspace{-0.5em}
  \caption{Causal substitution test for the symbolic benchmark. (a)~Surgical input replacement: changing $(c,d)$ to values consistent with a different operator flips op\textsubscript{2} while preserving op\textsubscript{1}; changing $(a,b)$ flips op\textsubscript{1} while preserving op\textsubscript{2}. (b)~Systematic results over 200 base examples per target operator confirm a clean double dissociation.}
  \label{fig:substitution}
\end{figure}

\begin{table}[htbp]
\centering
\caption{Symbolic benchmark: test-set holdout accuracy (\%) averaged over 3 seeds, for the held-out operator pair named in each row (\texttt{\^{}}\,=\,XOR, \texttt{|}\,=\,OR, \texttt{\&}\,=\,AND, \texttt{d}\,=\,NAND, \texttt{L}\,=\,LSHIFT, \texttt{R}\,=\,RSHIFT). Baselines: KNN\,=\,0\%, MLP (multilayer perceptron)\,=\,0\%, KRR\,=\,0--18.2\%, Oracle\,=\,100\%. Full details in Appendix~\ref{app:symbolic}.}
\label{tab:symbolic}
\small
\begin{tabular}{lccc}
\toprule
Holdout & Full (familiar) & Full (opaque) & Label-only \\
\midrule
\texttt{\^{}|}  & 73.2 $\pm$ 2.0  & 73.0 $\pm$ 5.1  & 29.1 $\pm$ 0.8  \\
\texttt{\&L}    & 55.3 $\pm$ 3.9  & 65.9 $\pm$ 1.9  & 14.5 $\pm$ 2.5  \\
\texttt{R\^{}}  & 28.2 $\pm$ 6.6  & 24.6 $\pm$ 7.0  & 6.1 $\pm$ 3.2   \\
\texttt{d|}     & 54.4 $\pm$ 17.7 & 56.3 $\pm$ 14.2 & 33.3 $\pm$ 9.8  \\
\bottomrule
\end{tabular}
\end{table}

\section{Discussion}

\subsection{The plateau question}

A central question in artificial intelligence, with scientific, economic, and societal consequences, is whether transformer-based models are approaching a fundamental ceiling. Interpolation-only accounts provide a clear route to such a ceiling: if all a model does is recombine observed patterns, then its capabilities are bounded by those patterns, and neither scaling nor deeper chain-of-thought reasoning changes what the architecture \emph{can} do.

These are serious arguments, and they are partially right. Transformers often do default to interpolation when interpolation works \citep{mccoy2023embers,wu2024reasoning,mirzadeh2024gsm}. That is what makes stronger architectural-ceiling claims, including those advanced by \citet{marcus2022deep}, worth testing rather than dismissing. The question is whether this reflects a fundamental architectural limitation or a preference for the easier strategy. Our experiment directly tests this by constructing a task where interpolation achieves provably 0\%, not just low performance but mathematical impossibility, and showing that a standard two-layer transformer nevertheless succeeds, reaching up to 100\%.

This falsifies the strongest architectural form of interpolation-only accounts: a standard transformer can succeed where interpolation is provably impossible. To the extent that plateau arguments rest on that premise, it removes that basis for a ceiling. The architectural capacity for genuine computation beyond the patterns in training data therefore exists. We are precise about scope: this proves the architecture \emph{can} compute absent functions, not that any particular LLM \emph{does} so, nor that scaling will produce unbounded improvement.

\subsection{The mechanism: constraint propagation, not memorisation}

Three results jointly establish how the transformer computes beyond its training distribution. First, the ablation studies isolate what matters: no unrolling 63.1\%, hard unrolling 65.5\%, soft unrolling 96.7\%. The progression shows that neither seeing more data nor having multi-step structure is sufficient. The model requires differentiable signal propagation through intermediate predictions. Second, the GF(2) constraint solver proves the information exists: with two timesteps, 100\% of samples are uniquely identifiable; the constraint system is heavily overdetermined. Third, constraint density governs learning speed: radius-2 rules with 8 hidden patterns (but 24 visible) are learned in 50 epochs while radius-1 rules with 1 hidden pattern (and only 7 visible) require 220 epochs. Within a single rule, hiding \emph{more} patterns makes learning harder ($k$-sweep, Appendix~\ref{app:ksweep}), confirming that it is the visible constraints, not the hidden ones, that drive learning. The implication is constructive: richer domains---more operators, wider interactions, deeper causal chains---provide denser constraint graphs, so this mechanism may strengthen rather than weaken as task complexity grows. Appendix~\ref{app:explicit_local_boolean_block} shows that it is also built into the transformer block itself, with attention gathering local bits and the ReLU FFN computing the Boolean rule.

\subsection{Implications for LLMs}

The mechanism we identify is constraint propagation through sequential prediction. It requires only attention and feedforward layers. It is supported empirically by the symbolic benchmark, which shows the same capacity to compute beyond the training distribution in a setting unrelated to CA, and constructively by Appendix~\ref{app:explicit_local_boolean_block}, which gives an exact local-rule circuit for a standard transformer block. The model goes further than merely computing the right answer: it externalises discovered operators as tokens in a structured derivation, representing the rule in a communicable form.

We emphasise that this is shown in principle. Natural language offers far denser constraints than our benchmarks: longer proof chains, worked calculations, programs, and explanations provide many opportunities for partial token-level matching before a derivation goes wrong. But those constraints are noisier, the rules are softer, and the training signal is less consistently structured. Whether the density advantage outweighs the noise is an empirical question. Our result establishes that the barrier is not architectural; the open question is whether real training conditions produce the right dynamics.

Our phase transition result suggests \emph{when} this capacity activates: holdout accuracy remains low relative to supervised accuracy until supervised performance reaches roughly $\sim$85\%, then rises sharply. This delayed transition appears in both experiments despite their different structure. If the pattern transfers to natural language, it predicts a threshold effect: models below a competence level in some domain may rely mostly on interpolation, while those above it may begin computing rules never directly present in their training data. This has a direct bearing on the plateau debate; it suggests that new capabilities could emerge as models cross domain-specific competence thresholds, rather than improvements tapering off smoothly.

In Experiment~2, the model receives only five integers and must recover an operator composition it has never seen, expressing the result as a symbolic derivation. This is a minimal analogue of a larger possibility: an LLM's training data contains billions of observations that are consequences of underlying rules---experimental measurements shaped by physical laws, clinical outcomes governed by biological mechanisms, mathematical results following from axioms. The constraint-density result suggests that such domains may, in some respects, be \emph{easier}, not harder, for this mechanism: compared with our 7-operator benchmark, real-world systems often contain far more interacting rules and observable consequences, and may therefore provide richer indirect signal for rule recovery. Our results therefore show that transformer architectures can, in principle, recover rules absent from their training data and express them explicitly---not merely compute the right answer internally but represent the discovered rule in a communicable symbolic form. This is a prerequisite for AI systems that contribute to scientific discovery rather than merely recapitulating known patterns.

A related future direction is to test whether validators such as calculators, interpreters, proof checkers, theorem provers, simulators, or unit tests can provide semantic feedback on intermediate or final outputs. Such feedback could give models a denser training signal than exact token matching alone, analogous to how soft unrolling supplied smoother downstream signal in Experiment~1, potentially lowering the competence threshold at which constraint propagation becomes useful.

\subsection{Limitations}

We are precise about the boundaries of our claim.

\begin{enumerate}
\item \textbf{Synthetic setting.} We use two-layer transformers on synthetic tasks (1D cellular automata, integer operator chains). The value is the proof of principle, not the specific numbers; we cannot directly extrapolate to LLMs at scale.
\item \textbf{Interpolation lower bound specificity.} The provable 0\% interpolation result is specific to pure XOR / Rule~150. The broader architectural construction in Appendix~\ref{app:explicit_local_boolean_block} is more general: a standard transformer block can implement any radius-$1$ Boolean rule exactly.
\item \textbf{Bimodal outcomes.} Not every seed succeeds, though failure rates respond systematically to training conditions (more epochs, denser constraints). For Rule~150, two of eight patterns show bimodal convergence where the model finds a consistent alternative rule. Experiment~2 does not exhibit bimodality: seed variance is continuous rather than all-or-nothing, consistent with the explicit derivation structure providing more reliable gradient signal.
\item \textbf{Proof of possibility, not performance.} Our result shows that transformers are not fundamentally limited to interpolation, and Appendix~\ref{app:explicit_local_boolean_block} makes this concrete with an explicit local-rule circuit in a standard transformer block. It does not prove that any particular scaling trajectory will continue, that training on natural language produces the same dynamics, or that all domains contain sufficient indirect signal for constraint propagation. The absence of a fundamental ceiling does not imply the absence of practical ones.
\item \textbf{Symbolic benchmark scope.} The operator benchmark shows the mechanism beyond CA, but in a small deterministic setting and without a mathematical guarantee. Whether it extends to stochastic or continuous-valued domains remains untested.
\end{enumerate}

\section{Conclusion}

We constructed a setting where similarity-based interpolation is ruled out by mathematical proof and showed that a standard two-layer transformer succeeds anyway. KNN, Gaussian processes, RBF SVMs, and Random Forests all achieve provably 0\% on the held-out XOR pattern. The transformer, receiving zero direct supervision, recovers the hidden rule in 47 of 60 runs; the best models reach 100\%, verified by polynomial extraction confirming XOR structure, causal layer ablations, and identified parity neurons.

This result has direct implications for the debate over whether AI progress faces a fundamental plateau. The strongest version of the plateau argument---that the transformer architecture is inherently limited to interpolation and therefore bounded by its training data---is falsified. The architectural capacity for genuine computation exists and requires only attention and feedforward layers, the same components in every large language model. A second experiment on compositional operator chains, structurally unrelated to cellular automata, reproduces a similar phase transition and is consistent with the same mechanistic signature, showing the capacity is not specific to cellular automata. The model externalises discovered operators as learned symbols, suggesting that constraint-driven learning may enable models not only to discover absent rules but to communicate them.

Our result does not guarantee unlimited progress, nor does it prove that LLMs exercise this capacity in practice. What it does is remove the theoretical foundation for claims of an inevitable ceiling rooted in the assumption that transformers can only interpolate over their training distribution. The question shifts from ``can transformers compute beyond interpolation?'' to ``when and under what conditions do they?'' Our phase transition result suggests an answer: above a domain-specific competence threshold, constraint propagation activates and models may begin computing rules never directly present in their training data. The question of whether this capacity is realised at scale remains open, but the question of whether it is architecturally possible is now answered, both empirically and constructively.

\bibliography{references}
\bibliographystyle{plainnat}

\newpage
\appendix

\section{Proof of Theorems}
\label{app:proofs}

\begin{theorem}[Rule 150 / $n=3$: all monotonic similarity interpolation fails]
\label{thm:monotone}
Let $n=3$. Any score
\[
S(p)=\sum_{q \in T} w(d_H(p,q))\,y(q)
\]
with nonnegative weights $w(1)\geq w(2)\geq w(3)\geq 0$ satisfies
$y(p)S(p)\leq 0$. Thus the similarity score never gives positive evidence for
the correct label; with any non-favourable tie break it predicts $-y(p)$ or
ties. In particular, $k$-NN majority vote fails for every odd
$k\in\{1,3,5,7\}$.
\end{theorem}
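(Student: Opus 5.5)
The plan is to compute $S(p)$ exactly by grouping training points according to their Hamming distance from $p$. For $n=3$ the set $T=\{0,1\}^3\setminus\{p\}$ contains $\binom{3}{1}=3$ points at distance $1$, $\binom{3}{2}=3$ at distance $2$, and $\binom{3}{3}=1$ at distance $3$. The parity-distance identity gives $y(q)=y(p)(-1)^{d_H(p,q)}$. Substituting this into the score yields
\[
y(p)S(p)=\sum_{q\in T} w(d_H(p,q))(-1)^{d_H(p,q)} = -3w(1)+3w(2)-w(3).
\]
Here we used $y(p)^2=1$.

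Next I would bound this expression using the monotonicity hypothesis. Since $w(2)\le w(1)$, the first two terms satisfy $-3w(1)+3w(2)=3(w(2)-w(1))\le 0$. Since $w(3)\ge 0$, the last term satisfies $-w(3)\le 0$. Hence $y(p)S(p)\le 0$. This means the score is either zero or has the sign of $-y(p)$. So a sign-based predictor outputs $-y(p)$ or ties, and any tie-break not favouring $y(p)$ gives the wrong label.

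For the $k$-NN corollary I would not use the weighted score directly. Instead I would count labels among the $k$ nearest training points, with ties in distance broken arbitrarily; this is the step needing some care. The distances of $T$ from $p$, sorted, are $1,1,1,2,2,2,3$, with labels (relative to $y(p)$) $-,-,-,+,+,+,-$. For each odd $k$ I would count the wrong labels among the $k$ nearest points:
\begin{itemize}
\item $k=1$: all $1$ of $1$ is wrong.
\item $k=3$: all $3$ of $3$ are wrong.
\item $k=5$: $3$ of $5$ are wrong.
\item $k=7$: $4$ of $7$ are wrong.
\end{itemize}
Any $k$-NN selection must take all closer shells before any point of a farther shell. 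Ties within the distance-$2$ shell do not change the count, because all points in that shell share the same label. So in every case the strict majority is $-y(p)$, and the claim holds independently of how ties are broken.

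The main obstacle is merely making sure the hypotheses cover the intended cases. Unweighted $k$-NN corresponds to $w=\mathbf{1}[d\le r]$ only when $k$ falls at a shell boundary, which is why the explicit count above is needed for $k=5$. With that count in hand, the rest is routine.
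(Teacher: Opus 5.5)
Your proposal is correct and follows the paper's proof essentially step for step. You use the shell counts $3,3,1$ with the parity-distance identity to get $y(p)S(p)=-3w(1)+3w(2)-w(3)\le 0$, and you handle the $k$-NN cases by the same explicit label count; your added remark that tie-breaking within the distance-$2$ shell is harmless, because all points there share a label, is a small clarification the paper leaves implicit.
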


\begin{proof}[Proof of Theorem~\ref{thm:monotone}]
In $\{0,1\}^3$, the distance shells around $p$ contain 3 points at distance~1, 3 at distance~2, and 1 at distance~3. By the parity-distance identity, the 3 distance-1 and 1 distance-3 points have label $-y(p)$ (4 total), while the 3 distance-2 points have label $y(p)$ (3 total).

(a) $k=1$ or $3$: all selected neighbours at distance~1, all $-y(p)$. $k=5$: three distance-1 ($-y(p)$) plus two distance-2 ($y(p)$), majority $-y(p)$. $k=7$: all of $T$, majority $-y(p)$ (4 vs 3).

(b) The weighted vote is
\[
S(p)=\sum_{q \in T} w(d_H(p,q))\,y(q)
= y(p)[-3w(1)+3w(2)-w(3)].
\]
Since $w(1)\geq w(2)$, we have $-3w(1)+3w(2)\leq 0$, and $-w(3)\leq0$, so
$y(p)S(p)\leq0$.
\end{proof}

\begin{theorem}[General $n$: nondegenerate completely monotone kernels give 0\%]
\label{thm:cm}
For any $n \geq 1$, let
\[
S(p)=\sum_{q\in T} w(d_H(p,q))\,y(q),
\qquad
w(d)=\int_{[0,1]} r^d\,d\mu(r),
\]
where $\mu$ is a finite nonnegative measure with $\mu((0,1])>0$. Then
$y(p)S(p)<0$, so the kernel-similarity vote predicts $-y(p)$.
\end{theorem}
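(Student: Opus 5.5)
Using the parity-distance identity $y(q)=y(p)(-1)^{d_H(p,q)}$ and grouping the training set by Hamming shells around $p$, we have $\binom{n}{d}$ points at distance $d$, for $d=1,\dots,n$. The point $p$ itself, at $d=0$, is excluded from $T$. Hence
\[
y(p)S(p)=\sum_{d=1}^{n}\binom{n}{d}(-1)^d w(d).
\]
The plan is to insert the integral representation of $w$ and exchange the finite sum with the integral, which is legitimate since the sum is finite and $\mu$ is finite. This gives
\[
y(p)S(p)=\int_{[0,1]}\sum_{d=1}^{n}\binom{n}{d}(-r)^d\,d\mu(r)
=\int_{[0,1]}\bigl[(1-r)^n-1\bigr]\,d\mu(r),
\]
by the binomial theorem, after subtracting the missing $d=0$ term. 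This closed form is the key identity.

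Next we check the sign of the integrand $g(r)=(1-r)^n-1$ on $[0,1]$. For $n\ge 1$ we have $g(0)=0$, and for $r\in(0,1]$ we have $0\le 1-r<1$, so $(1-r)^n<1$ and thus $g(r)<0$. Here the convention $0^0=1$ is used at $r=0$, where $w$ puts mass only on the $d=0$ term, which does not appear. So $g\le 0$ everywhere, with strict inequality on $(0,1]$.

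The last step, and the only place the hypothesis $\mu((0,1])>0$ enters, is turning pointwise strict negativity into a strict inequality for the integral. Write $(0,1]=\bigcup_m [1/m,1]$. By continuity of measure, some $[1/m,1]$ has positive $\mu$-mass. On that interval $g(r)\le (1-1/m)^n-1<0$, a uniform negative bound. Multiplying this bound by the positive mass, and noting that the rest of the integral is $\le 0$, gives $y(p)S(p)<0$. So the vote has the sign $-y(p)$, as claimed.

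I expect no real obstacle. The one thing to handle carefully is the degenerate case where $\mu$ is concentrated at $r=0$. There $w(d)=0$ for all $d\ge1$, so $S\equiv0$, which is exactly why the nondegeneracy assumption is required; the uniform lower bound on a subinterval handles every other case cleanly. Finally, for the applications I would remark that the Gaussian/RBF kernel $e^{-\gamma d}$ is of this form with $\mu=\delta_{e^{-\gamma}}$. In general, Bernstein's theorem identifies such $w$ with completely monotone functions of $d$ via $r=e^{-t}$.
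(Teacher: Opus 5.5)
Your proof is correct and follows the same route as the paper: group $T$ by Hamming shells using the parity-distance identity, substitute the mixture representation of $w$, and collapse the sum with the binomial theorem to $\int_{[0,1]}[(1-r)^n-1]\,d\mu(r)$, with $\mu((0,1])>0$ giving strict negativity. Your continuity-of-measure argument, with the uniform bound on some $[1/m,1]$ of positive mass, makes explicit the final step that the paper states in one line.
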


\begin{proof}[Proof of Theorem~\ref{thm:cm}]
Group by Hamming distance:
\[
S(p)=y(p)\sum_{d=1}^{n}\binom{n}{d}(-1)^d w(d).
\]
Substituting the mixture representation and applying the binomial theorem,
\[
\sum_{d=1}^{n}\binom{n}{d}(-1)^d w(d)
=
\int_{[0,1]}\bigl[(1-r)^n-1\bigr]\,d\mu(r)<0,
\]
because $(1-r)^n-1<0$ for all $r\in(0,1]$ and $\mu((0,1])>0$. Therefore
$y(p)S(p)<0$.
\end{proof}

\begin{corollary}
\label{cor:rbf}
For any $\gamma>0$, the RBF kernel on $\{0,1\}^n$ satisfies
\[
\exp(-\gamma\|x-q\|^2)=(e^{-\gamma})^{d_H(x,q)}.
\]
Thus it is completely monotone as a function of Hamming distance, so
Theorem~\ref{thm:cm} applies directly. Moreover $r=e^{-\gamma}\in(0,1)$, so the
GP / kernel-ridge result of Theorem~\ref{thm:gp} applies whenever $\sigma^2>0$.
\end{corollary}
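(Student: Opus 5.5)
The corollary has two claims: the RBF kernel on the cube equals $r^{d_H}$ with $r=e^{-\gamma}$, and this kernel meets the hypotheses of Theorem~\ref{thm:cm} and of Theorem~\ref{thm:gp}.

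For the identity, I would compute the squared Euclidean norm coordinatewise. For $x,q\in\{0,1\}^n$, each coordinate difference $x_i-q_i$ lies in $\{-1,0,1\}$, so $(x_i-q_i)^2=\mathbf{1}[x_i\neq q_i]$. Summing over $i$ gives $\|x-q\|^2=d_H(x,q)$, and therefore
\[
\exp(-\gamma\|x-q\|^2)=\exp(-\gamma\,d_H(x,q))=(e^{-\gamma})^{d_H(x,q)}.
\]
The same observation justifies the earlier remark that every $\ell_m$ distance with finite $m$ is a monotone function of Hamming distance, since $\|x-q\|_m=d_H(x,q)^{1/m}$.

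To apply Theorem~\ref{thm:cm}, I would represent $w(d)=r^d$ with $r=e^{-\gamma}$ as a mixture of the required form. Take $\mu=\delta_r$, the unit point mass at $r$. It is a finite nonnegative measure, and $\int_{[0,1]}s^d\,d\delta_r(s)=r^d$. Because $\gamma>0$, we have $r\in(0,1)$, so $\mu((0,1])=1>0$. Theorem~\ref{thm:cm} then gives $y(p)S(p)<0$ directly. As a check, the sum reduces in closed form to $y(p)\bigl[(1-r)^n-1\bigr]$, which is strictly negative.

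For the GP / kernel-ridge claim, the argument depends on the hypotheses of Theorem~\ref{thm:gp}. That theorem is stated later, so I can only anticipate its form: it presumably asks for a kernel of the form $r^{d_H}$ with $r\in(0,1)$ and a noise or ridge parameter $\sigma^2>0$. We have just shown that $r=e^{-\gamma}$ lies in $(0,1)$, so the hypotheses would be verified with nothing further to do.

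\textbf{Main obstacle.} The only delicate step is confirming that the strict inequality $r<1$ is exactly what Theorem~\ref{thm:gp} requires. I expect it to be needed for positive definiteness, or to avoid a degenerate Gram matrix. The point-mass representation itself is routine.
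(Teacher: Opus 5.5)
Your proof is correct and follows the paper's argument: the coordinatewise identity $\|x-q\|^2=d_H(x,q)$, followed by the observation that $r=e^{-\gamma}\in(0,1)$ meets the hypotheses of Theorems~\ref{thm:cm} and~\ref{thm:gp}. Your point-mass representation $\mu=\delta_{e^{-\gamma}}$ makes explicit what the paper leaves implicit, and your guess at the hypotheses of Theorem~\ref{thm:gp} (kernel $r^{d_H}$ with $0<r<1$ and $\sigma^2>0$) matches its actual statement.
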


\begin{proof}[Proof of Corollary~\ref{cor:rbf}]
On $\{0,1\}^n$, $(x_i - q_i)^2 \in \{0,1\}$, so
$\|x-q\|^2 = d_H(x,q)$ and
\[
\exp(-\gamma\|x-q\|^2)=(e^{-\gamma})^{d_H(x,q)}.
\]
For $\gamma>0$, $e^{-\gamma}\in(0,1)$. Therefore the kernel is a completely
monotone function of Hamming distance for Theorem~\ref{thm:cm}, and it satisfies
the $0<r<1$ assumption of Theorem~\ref{thm:gp}.
\end{proof}

\begin{theorem}[GP / kernel ridge regression with RBF kernel]
\label{thm:gp}
Let $0<r<1$ and $\sigma^2>0$. For GP regression or kernel ridge regression with
$K(x,q)=r^{d_H(x,q)}$ and regularized covariance $C=K+\sigma^2 I$, trained on
$T=\{0,1\}^n\setminus\{p\}$, the leave-one-out prediction at $p$ has sign
$-y(p)$.
\end{theorem}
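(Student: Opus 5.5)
The plan is to avoid inverting the $(2^n-1)\times(2^n-1)$ training covariance directly. Instead I will use the leave-one-out identity for Gaussian processes together with the Walsh--Fourier diagonalisation of the kernel on the full cube. Let $\mathbf{C}=\mathbf{K}+\sigma^2 I$ be the regularised covariance on all of $\{0,1\}^n$, indexed so that $p$ is one coordinate and $T$ the rest. The GP posterior mean / KRR prediction at $p$ is $m(p)=\mathbf{k}_p^\top \mathbf{C}_{T}^{-1}\mathbf{y}_T$, where $\mathbf{k}_p=(K(p,q))_{q\in T}$.

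\textbf{Step 1 (leave-one-out identity).} Applying the block-inverse (Schur complement) formula to $\mathbf{C}$ gives the standard identity
\[
\mathbf{k}_p^\top \mathbf{C}_T^{-1}\mathbf{y}_T \;=\; y(p)-\frac{[\mathbf{C}^{-1}\mathbf{y}]_p}{[\mathbf{C}^{-1}]_{pp}} .
\]
The off-diagonal block of $\mathbf{C}$ is exactly $\mathbf{k}_p$, since the noise term $\sigma^2 I$ only affects the diagonal. So the prediction is the same whether one conditions the latent $f(p)$ or the noisy observation at $p$. I will verify this identity explicitly rather than cite it, since it is the only place the noise convention matters.

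\textbf{Step 2 (diagonalisation).} Since $K(x,q)=r^{d_H(x,q)}=\prod_i r^{[x_i\neq q_i]}$, the matrix $\mathbf{K}$ is the $n$-fold Kronecker power of $\begin{pmatrix}1&r\\ r&1\end{pmatrix}$. Its eigenvectors are therefore the Walsh characters $\chi_S(x)=(-1)^{\sum_{i\in S}x_i}$, with eigenvalues $(1+r)^{n-|S|}(1-r)^{|S|}$.

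Two consequences follow.
\begin{itemize}
\item The label vector $\mathbf{y}$ is, up to a global sign, the character $\chi_{[n]}$. It is thus an eigenvector of $\mathbf{C}$ with eigenvalue $\lambda_{\min}:=(1-r)^n+\sigma^2$, which gives $[\mathbf{C}^{-1}\mathbf{y}]_p=y(p)/\lambda_{\min}$.
\item Because the characters are orthogonal with $|\chi_S(p)|=1$, the diagonal entry is the average reciprocal eigenvalue:
\[
[\mathbf{C}^{-1}]_{pp}=2^{-n}\sum_{S}\bigl((1+r)^{n-|S|}(1-r)^{|S|}+\sigma^2\bigr)^{-1}=:a .
\]
\end{itemize}

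\textbf{Step 3 (sign).} Substituting into Step~1 gives $m(p)=y(p)\bigl(1-\tfrac{1}{\lambda_{\min}a}\bigr)$. So $y(p)m(p)<0$ is equivalent to $a<1/\lambda_{\min}$. Since $0<r<1$, we have $1-r<1+r$, so $\lambda_{\min}$ is the strictly smallest eigenvalue whenever $n\ge1$; for example, $S=\emptyset$ gives $(1+r)^n+\sigma^2>\lambda_{\min}$. Hence the mean $a$ of the reciprocals is strictly less than their maximum $1/\lambda_{\min}$. The hypothesis $\sigma^2>0$ guarantees $\lambda_{\min}>0$, so all quantities are finite. The case $r\to1$ with $\sigma=0$ is exactly what is excluded.

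I expect the main obstacle to be Step~1: stating the leave-one-out formula carefully, including its sign and the fact that it applies to the noise-free latent prediction with the training covariance $\mathbf{C}_T$. Steps~2 and~3 are routine once the Kronecker structure is noted.
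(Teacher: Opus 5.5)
Your proposal is correct and follows essentially the same route as the paper: diagonalise $K$ as a Kronecker power with Walsh eigenvectors, note that $y$ is the eigenvector with the smallest eigenvalue so $[C^{-1}y]_p = y(p)/((1-r)^n+\sigma^2)$, and apply the leave-one-out identity. The sign then follows because $[C^{-1}]_{pp}$, the average of the reciprocal eigenvalues, is strictly below their maximum. Your explicit Schur-complement check of the leave-one-out identity, which the paper simply cites from Rasmussen and Williams, is a worthwhile addition but does not change the argument.
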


\begin{proof}[Proof of Theorem~\ref{thm:gp}]
Because $0<r<1$, the kernel matrix is positive definite. The kernel
$r^{d_H(x,q)} = \prod_{t=1}^n \kappa(x_t, q_t)$ where
$\kappa(a,b) = r^{|a-b|}$. The kernel matrix $K$ is the $n$-fold Kronecker product
of $\bigl(\begin{smallmatrix} 1 & r \\ r & 1 \end{smallmatrix}\bigr)$, with
eigenvectors formed by choosing $(1,1)$ (eigenvalue $1+r$) or $(1,-1)$
(eigenvalue $1-r$) per coordinate. The parity vector $y$ picks $(1,-1)$ in every
coordinate, giving eigenvalue $\lambda_y = (1-r)^n$---the smallest eigenvalue.

Let $K$ and $C = K + \sigma^2 I$ denote the kernel and regularized covariance
matrices on the full cube $\{0,1\}^n$. Then $C^{-1}y = cy$, where
$c = ((1-r)^n + \sigma^2)^{-1}$. The prediction at $p$ after training on
$T=\{0,1\}^n\setminus\{p\}$ is the leave-one-out prediction from the full-cube
system. By the LOO identity \citep{rasmussen2006gaussian},
\[
\mu_{-p}(p)=y(p)(1-c/\beta),
\qquad
\beta=[C^{-1}]_{pp}=\frac{1}{2^n}\sum_S\frac{1}{\lambda_S+\sigma^2}.
\]
Since $\lambda_y$ is the smallest eigenvalue, $c$ is the maximum of
$1/(\lambda_S+\sigma^2)$, and $\beta$, the average over all $2^n$ eigenvalues,
is strictly less. Thus $c/\beta>1$, so $\mu_{-p}(p)$ has sign $-y(p)$.
\end{proof}

\begin{lemma}[Decision trees and Random Forests]
\label{lem:rf}
Any axis-aligned decision tree whose nonempty leaves predict the training-set
majority, and whose empty leaves fall back to the global training-set majority,
predicts $-y(p)$. Any Random Forest aggregating such trees also predicts $-y(p)$.
\end{lemma}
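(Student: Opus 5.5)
The plan is to exploit that an axis-aligned split on a binary coordinate can only partition the cube into subcubes. Then we apply the parity-balance fact (a consequence of the parity-distance identity) inside the leaf cell that contains $p$.

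\emph{Step 1: the leaf cell is a subcube.} On $\{0,1\}^n$, a split ``$x_i \le \theta$'' is trivial if $\theta<0$ or $\theta\ge 1$: one child is empty and the other equals the parent. Otherwise it separates $x_i=0$ from $x_i=1$. By induction down the tree, the cell of every node is a subcube $Q_J := \{x : x_j = a_j \text{ for } j\in J\}$ for some index set $J$ and fixed values $a_J$. Following $p$ from the root, we reach a leaf whose cell is $Q_J$ with $a_J = p_J$. Let $m := n-|J|$ be the number of free coordinates.

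\emph{Step 2: the majority in the leaf is $-y(p)$.} Suppose $m\ge 1$. Restricted to the free coordinates, the labels on $Q_J$ are $y(q) = y(p)(-1)^{d_H(q,p)}$, where $d_H(q,p)$ counts disagreements on free coordinates only. A subcube of dimension $m\ge 1$ has exactly $2^{m-1}$ points at even distance from $p$ and $2^{m-1}$ at odd distance. Removing $p$, which is at even distance $0$, leaves $2^{m-1}-1$ training points labelled $y(p)$ and $2^{m-1}$ labelled $-y(p)$. So the leaf is nonempty, its majority is strictly $-y(p)$, and no tie-breaking is needed.

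\emph{Step 3: the empty leaf and the forest.} If $m=0$, then $Q_J=\{p\}$ and the leaf contains no training points, so the tree falls back to the global majority of $T$. This is exactly the case $J=\emptyset$, $m=n\ge 1$ of Step 2, so the fallback is also $-y(p)$. Hence every such tree outputs $-y(p)$ at $p$. A Random Forest that aggregates these trees, by majority vote or by averaging leaf votes or class frequencies, then sees unanimous (or strictly $-y(p)$-leaning) inputs and outputs $-y(p)$.

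The main obstacle is pinning down what ``such trees'' means inside a Random Forest. With bootstrap resampling, leaf majorities are computed on a multiset drawn from $T$ rather than on $T$ itself, and these could tilt toward $y(p)$. I would handle this by reading the hypothesis literally: each tree's leaves predict the majority of $T$ restricted to the cell, which covers feature subsampling and arbitrary split choices but not bootstrap reweighting. I would add a remark that the bootstrap case is not covered by this argument. If more generality were wanted, the first thing I would try is averaging the leaf vote fractions over the bootstrap distribution: in expectation each cell still has the $2^{m-1}$ versus $2^{m-1}-1$ imbalance from Step 2, which would give the result in expectation but not deterministically.
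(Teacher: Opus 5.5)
Your proposal is correct and follows essentially the same argument as the paper: the leaf containing $p$ is a subcube, parity balance on a subcube with $m\ge 1$ free coordinates leaves $2^{m-1}$ versus $2^{m-1}-1$ training points in favour of $-y(p)$, and the $m=0$ empty leaf falls back to the global majority, which is again $-y(p)$. You add two things the paper leaves implicit: an explicit justification that axis-aligned splits on binary coordinates yield subcubes, and the caveat that bootstrap-resampled trees fall outside the literal hypothesis, which is an accurate scoping remark.
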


\begin{proof}[Proof of Lemma~\ref{lem:rf}]
Any nonempty axis-aligned leaf is a subcube with $m$ free coordinates. For
$m \geq 1$, parity is balanced on the subcube: pairing each point with its
neighbour from flipping one free coordinate establishes a bijection that flips
parity, giving $2^{m-1}$ of each label. If the leaf contains the held-out point
$p$, then removing $p$ leaves $2^{m-1}-1$ points with label $y(p)$ and $2^{m-1}$
points with label $-y(p)$, so the majority is $-y(p)$. If the leaf does not
contain $p$, its prediction is irrelevant to the query at $p$. For $m=0$, the
query leaf is the singleton $\{p\}$, which contains no training examples because
$p\notin T$; by the stated fallback convention it predicts the global training-set
majority, which is $-y(p)$ because $T$ contains $2^{n-1}-1$ points with label
$y(p)$ and $2^{n-1}$ with label $-y(p)$. Every tree predicts $-y(p)$ at $p$, so
any majority-vote or averaging ensemble does too.
\end{proof}

\subsection{Position-conditioned extension to raw neighbourhood embeddings}
\label{app:embedding_extension}

The hard-gap arguments above extend from raw binary inputs to raw pre-transformer
neighbourhood embeddings after conditioning on centre location, or equivalently
after subtracting the slotwise positional offsets before comparing neighbourhoods.
This isolates the role of the value embedding itself. Because the empirical
neighbourhood-embedding baseline in Table~\ref{tab:interpolation} pools examples
across absolute positions, where those offsets no longer cancel, the results below
do not by themselves prove the pooled table row; they prove the corresponding
fixed-location statement.

\begin{proposition}[Fixed-position neighbourhood embeddings preserve Hamming geometry]
\label{prop:hamming}
Let $\phi(0),\phi(1)\in\mathbb R^m$ denote the two value-embedding vectors induced
by the learned input projection, and let $\pi_1,\ldots,\pi_W\in\mathbb R^m$ denote
the learned absolute position embeddings. Fix a centre location
$i\in\{2,\ldots,W-1\}$ and define the raw pre-transformer neighbourhood embedding
\[
\Phi_i(x_L,x_C,x_R)
=
\big[\phi(x_L)+\pi_{i-1},\ \phi(x_C)+\pi_i,\ \phi(x_R)+\pi_{i+1}\big]
\in\mathbb R^{3m}.
\]
Let $\Delta := \phi(1)-\phi(0)$. Then for all $x,q\in\{0,1\}^3$,
\[
\|\Phi_i(x)-\Phi_i(q)\|_2^2
=
\|\Delta\|_2^2\, d_H(x,q).
\]
Hence, if $\Delta\neq 0$, the eight embedded neighbourhoods form a scaled
isometric copy of the $3$-bit Hamming cube.
\end{proposition}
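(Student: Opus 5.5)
The plan is a direct computation that exploits the block structure of $\Phi_i$. First, write the difference $\Phi_i(x)-\Phi_i(q)$ slot by slot. In each of the three slots, the positional offset is identical for $x$ and $q$ (it is $\pi_{i-1}$, $\pi_i$ or $\pi_{i+1}$ in both), so it cancels. The $j$-th block of the difference is therefore exactly $\phi(x_j)-\phi(q_j)$ for $j\in\{L,C,R\}$. This cancellation is the one place where fixing the centre location $i$ matters, and it is why the statement is restricted to a fixed position rather than the pooled setting.

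Second, evaluate each block using the fact that the inputs are binary. If $x_j=q_j$ the block is $0$. If $x_j\neq q_j$ the block is $\pm\Delta$, with $\Delta=\phi(1)-\phi(0)$. In both cases
\[
\|\phi(x_j)-\phi(q_j)\|_2^2=\|\Delta\|_2^2\,\mathbf 1[x_j\neq q_j].
\]
Third, the squared Euclidean norm of a concatenation is the sum of the squared norms of its blocks. Summing over the three slots gives $\|\Delta\|_2^2\sum_j \mathbf 1[x_j\neq q_j]=\|\Delta\|_2^2\,d_H(x,q)$, which is the identity.

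For the final claim, suppose $\Delta\neq 0$. Taking square roots gives $\|\Phi_i(x)-\Phi_i(q)\|_2=\|\Delta\|_2\sqrt{d_H(x,q)}$. This is injective on $\{0,1\}^3$, and its pairwise distances are a strictly increasing function of Hamming distance. It is an isometry of the cube with the metric $\|\Delta\|_2\sqrt{d_H}$, which is the sense in which the paper says ``scaled isometric copy''. I will state explicitly that distances are monotone in $d_H$, because that is what lets Results~\ref{thm:monotone}--\ref{lem:rf} transfer to the embedded points. No step is a real obstacle. The only care needed is to note that the restriction $i\in\{2,\ldots,W-1\}$ merely keeps the indices $i\pm1$ in range, and that nothing depends on $\phi$ beyond the two vectors $\phi(0),\phi(1)$.
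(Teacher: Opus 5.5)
Your proof is correct and follows the paper's argument step for step: the positional offsets cancel slotwise, each block of the difference is $0$ or $\pm\Delta$, and summing the squared block norms gives $\|\Delta\|_2^2\, d_H(x,q)$. Your reading of ``scaled isometric copy'' as an isometry for the metric $\|\Delta\|_2\sqrt{d_H}$ is more precise than the paper's wording. It is also exactly what Corollary~\ref{cor:embedding_gap} relies on, since Euclidean distance there must be a strictly increasing function of Hamming distance.
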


\begin{proof}
For each slot $s\in\{L,C,R\}$, the positional offset cancels in pairwise differences:
\[
\Phi_i(x)-\Phi_i(q)
=
\big[\phi(x_L)-\phi(q_L),\ \phi(x_C)-\phi(q_C),\ \phi(x_R)-\phi(q_R)\big].
\]
Since each input bit is binary, for each slot the difference is either $0$ (if the
bits agree) or $\pm \Delta$ (if they differ). Therefore
\[
\|\Phi_i(x)-\Phi_i(q)\|_2^2
=
\sum_{s\in\{L,C,R\}} \|\phi(x_s)-\phi(q_s)\|_2^2
=
\|\Delta\|_2^2 \sum_{s\in\{L,C,R\}} \mathbf 1[x_s\neq q_s]
=
\|\Delta\|_2^2 d_H(x,q).
\]
\end{proof}

\begin{corollary}[Rule~150 hard gap in fixed-position neighbourhood embeddings]
\label{cor:embedding_gap}
Let $y(x):=(-1)^{x_L+x_C+x_R}$ be the Rule~150 label, fix a held-out pattern
$p\in\{0,1\}^3$, and let $T:=\{0,1\}^3\setminus\{p\}$. Assume
$\Delta:=\phi(1)-\phi(0)\neq0$, so the binary value embedding distinguishes the
two input symbols. For any fixed centre location $i$:

\begin{enumerate}
\item Any monotone similarity score of the form
\[
S_i(p)=\sum_{q\in T} W(\|\Phi_i(p)-\Phi_i(q)\|_2)\, y(q),
\]
with $W$ nonnegative and nonincreasing, satisfies $y(p)S_i(p)\le0$. Thus the
score never gives positive evidence for the correct label; with any non-favourable
tie break it predicts $-y(p)$ or ties.

\item Any radial similarity method with
\[
K_i(x,q)=\psi(\|\Phi_i(x)-\Phi_i(q)\|_2^2),
\]
where $\psi$ is a nonzero completely monotone function, reduces to a
nondegenerate completely monotone function of Hamming distance, so
Theorem~\ref{thm:cm} applies. In particular,
\[
K_i(x,q)=\exp(-\gamma\|\Phi_i(x)-\Phi_i(q)\|_2^2)
      =\big(e^{-\gamma\|\Delta\|_2^2}\big)^{d_H(x,q)}
\]
for $\gamma>0$, so Corollary~\ref{cor:rbf} applies directly, and the GP /
kernel-ridge result of Theorem~\ref{thm:gp} applies unchanged with
$r=e^{-\gamma\|\Delta\|_2^2}\in(0,1)$.

\item Any axis-aligned decision tree on $\Phi_i$ using the leaf-majority and
empty-leaf fallback convention of Lemma~\ref{lem:rf} predicts $-y(p)$; therefore
Lemma~\ref{lem:rf} extends unchanged to Random Forests built from such trees.
\end{enumerate}
\end{corollary}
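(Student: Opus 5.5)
The plan is to reduce each of the three claims to the corresponding input-space result, using Proposition~\ref{prop:hamming} as the bridge. That proposition gives $\|\Phi_i(x)-\Phi_i(q)\|_2=\|\Delta\|_2\sqrt{d_H(x,q)}$ with $\|\Delta\|_2>0$. Every quantity in the statement that depends on embedded distances is therefore a function of Hamming distance alone. It remains to check that monotonicity, complete monotonicity, or tree structure survives this change of variables.

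\textbf{Part 1.} Define $w(d):=W(\|\Delta\|_2\sqrt d)$ for $d\in\{1,2,3\}$. Since $d\mapsto\|\Delta\|_2\sqrt d$ is strictly increasing and $W$ is nonnegative and nonincreasing, we get $w(1)\ge w(2)\ge w(3)\ge0$. Then $S_i(p)=\sum_{q\in T}w(d_H(p,q))\,y(q)$, which is exactly the score of Theorem~\ref{thm:monotone}, so $y(p)S_i(p)\le0$ follows immediately.

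\textbf{Part 2.} By Bernstein's theorem, a completely monotone $\psi$ on $[0,\infty)$ can be written as $\psi(t)=\int_{[0,\infty)}e^{-ts}\,d\nu(s)$ for a finite nonnegative measure $\nu$, and $\nu\neq0$ because $\psi\neq0$. Substituting $t=\|\Delta\|_2^2\,d$ gives
\[
K_i=\int_{[0,\infty)}\bigl(e^{-s\|\Delta\|_2^2}\bigr)^{d}\,d\nu(s).
\]
Pushing $\nu$ forward under $s\mapsto r=e^{-s\|\Delta\|_2^2}$ produces a measure $\mu$ on $(0,1]$. The point $r=0$ is never attained because $s$ is finite, so $\mu((0,1])=\nu([0,\infty))>0$. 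This is exactly the nondegenerate mixture hypothesis of Theorem~\ref{thm:cm}. For the Gaussian case, the displayed identity follows directly from Proposition~\ref{prop:hamming}, and $r=e^{-\gamma\|\Delta\|_2^2}\in(0,1)$. The GP/KRR kernel matrix on the eight embedded points therefore coincides entrywise with $r^{d_H}$, so Theorem~\ref{thm:gp} applies verbatim, with the same Kronecker eigenstructure and the same LOO identity.

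\textbf{Part 3.} I expect this part to be the main obstacle, because trees are not distance-based and Proposition~\ref{prop:hamming} alone does not suffice. The key observation concerns a single coordinate of $\Phi_i$ lying in slot $s\in\{L,C,R\}$ at component $j$. That coordinate equals $\pi_{\cdot,j}+\phi_j(x_s)$, so it depends on the single bit $x_s$ and takes at most two values.

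Hence an axis-aligned threshold split on that coordinate falls into one of two cases. Either it separates $x_s=0$ from $x_s=1$, which happens when $\Delta_j\neq0$ and the threshold lies between the two values. Or it sends all points to one side, in which case one child receives no training points. By induction on depth, every leaf of a tree on $\Phi_i$, restricted to the embedded cube, is the image of a subcube of $\{0,1\}^3$ (possibly empty), and the leaf containing $\Phi_i(p)$ is the image of a subcube containing $p$.

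Leaf-majority and empty-leaf fallback are defined purely in terms of which training points land in each leaf. The tree's prediction at $p$ is therefore identical to that of an input-space tree with these subcube leaves. The subcube-balance argument of Lemma~\ref{lem:rf}, including the $m=0$ fallback case, then gives $-y(p)$. Averaging or voting over such trees preserves this prediction, which handles Random Forests.

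The delicate point is making sure that degenerate splits (those with $\Delta_j=0$, or thresholds outside the two attained values) produce only empty branches. Such branches are covered by the global-majority fallback, and as argued above it also equals $-y(p)$.
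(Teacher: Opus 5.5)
Your proposal is correct and follows the paper's proof essentially step for step. You use Proposition~\ref{prop:hamming} to reduce the similarity score to the shell computation of Theorem~\ref{thm:monotone}. You use Bernstein's theorem with the pushforward $s\mapsto e^{-s\|\Delta\|_2^2}$ to obtain the nondegenerate mixture required by Theorem~\ref{thm:cm}, and the same substitution gives the GP / kernel-ridge case. You use the two-valued-coordinate observation to reduce trees to subcube leaves, so Lemma~\ref{lem:rf} applies. Your extra care about degenerate splits is sound, but the fallback there is not actually needed: the query $\Phi_i(p)$ always travels with all eight embedded points, so it never lands in an empty branch created by such a split.
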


\begin{proof}
Let $\delta:=\|\Delta\|_2$. By Proposition~\ref{prop:hamming}, the seven training points around
a held-out pattern $p$ lie on shells of radii $\delta,\sqrt2\,\delta,\sqrt3\,\delta$
with counts $3,3,1$. By the parity-distance identity, the distance-$\delta$ and
distance-$\sqrt3\,\delta$ points have label $-y(p)$, while the distance-$\sqrt2\,\delta$
points have label $y(p)$. Hence
\[
S_i(p)
=
\sum_{q\in T} W(\|\Phi_i(p)-\Phi_i(q)\|_2)\, y(q)
=
y(p)\big[-3W(\delta)+3W(\sqrt2\,\delta)-W(\sqrt3\,\delta)\big].
\]
The bracketed term is nonpositive because $W$ is nonincreasing, so
$y(p)S_i(p)\le0$, proving (1).

For (2), Proposition~\ref{prop:hamming} shows that any radial kernel in
$\|\Phi_i(x)-\Phi_i(q)\|_2^2$ is a function of $d_H(x,q)$ alone. If $\psi$ is a
nonzero completely monotone function, Bernstein's theorem gives
\[
\psi(z)=\int_{[0,\infty)} e^{-tz}\,d\nu(t)
\]
for some nonzero finite nonnegative measure $\nu$. Therefore
\[
K_i(x,q)
=
\int_{[0,\infty)} \big(e^{-t\|\Delta\|_2^2}\big)^{d_H(x,q)} d\nu(t),
\]
a nonnegative mixture of powers $r^{d_H(x,q)}$. Since $\Delta\neq0$, the map
$t\mapsto r=e^{-t\|\Delta\|_2^2}$ sends $[0,\infty)$ into $(0,1]$. Because
$\nu$ is nonzero, the induced measure has positive mass on $(0,1]$, exactly the
setting of Theorem~\ref{thm:cm}. For the RBF kernel with $\gamma>0$,
$r=e^{-\gamma\|\Delta\|_2^2}\in(0,1)$, so the same RBF argument as
Corollary~\ref{cor:rbf}, with effective parameter $\gamma\|\Delta\|_2^2$, and
the GP / kernel-ridge result of Theorem~\ref{thm:gp} apply.

For (3), each coordinate of $\Phi_i$ takes at most two values, one for bit $0$
and one for bit $1$ in a fixed slot. Any nonconstant axis-aligned threshold split
is therefore equivalent to fixing one original bit value in one slot. Leaves are
subcubes of $\{0,1\}^3$, and the empty-leaf fallback convention is the same as in
Lemma~\ref{lem:rf}, so the lemma applies unchanged.
\end{proof}

\begin{remark}[Training does not break this geometry]
This statement depends only on the additive form of the input parameterisation,
not on the initialisation. SGD may move $\phi(0)$, $\phi(1)$, and the position
embeddings, but for fixed centre location the position terms cancel exactly, and
every bit flip still contributes the same squared distance $\|\Delta\|_2^2$.
Training may translate, rotate, or rescale the embedded cube, but it cannot warp
its Hamming-shell structure. The barrier disappears only after the transformer
layers mix positions nonlinearly.
\end{remark}

\begin{remark}[Why the statement is position-conditioned]
If one compares neighbourhoods centred at different absolute locations $i\neq k$,
then
\[
\Phi_i(x)-\Phi_k(q)
=
\big[\phi(x_L)-\phi(q_L)+(\pi_{i-1}-\pi_{k-1}),\
     \phi(x_C)-\phi(q_C)+(\pi_i-\pi_k),\
     \phi(x_R)-\phi(q_R)+(\pi_{i+1}-\pi_{k+1})\big],
\]
so pairwise distances acquire additional position-dependent terms. The clean
Hamming isometry therefore holds only after conditioning on centre location, or
equivalently after subtracting the slotwise positional offsets before comparison.
This is why the appendix result sharpens, but does not replace, the empirical
pooled-position neighbourhood-embedding baseline in Table~\ref{tab:interpolation}.
\end{remark}

\subsection{A single transformer block implements any radius-$1$ Boolean rule}
\label{app:explicit_local_boolean_block}

We now strengthen the Rule~150 construction. Rather than proving a circuit only for
parity, we show that a single standard transformer encoder block already implements
\emph{any} radius-$1$ Boolean rule
\[
f:\{0,1\}^3 \to \{0,1\},
\qquad
y_i = f(x_{i-1},x_i,x_{i+1}),
\]
exactly at the final logit / threshold level. Rule~150 is the special case
\(f(L,C,R)=L\oplus C\oplus R\).

The proof separates the two roles:
\begin{enumerate}
\item self-attention performs smooth local routing of the left and right neighbour
bits, while the centre bit is already present in the residual stream;
\item the ReLU FFN performs the non-affine Boolean rule computation.
\end{enumerate}

\paragraph{Canonical input code.}
Let
\[
s(b):=2b-1\in\{-1,+1\},
\qquad
\theta_j:=\frac{2\pi j}{W}.
\]
For each position \(j\), define the antisymmetric absolute-position code
\[
p_j:=(\cos\theta_j,\sin\theta_j)\in\mathbb R^2,
\qquad
\pi_j:=(\cos\theta_j,-\cos\theta_j,\sin\theta_j,-\sin\theta_j)\in\mathbb R^4.
\]
We use the \(12\)-dimensional token code
\[
h_j^{(0)}
=
\bigl(0,0,\ s(x_j),-s(x_j),\ 0,0,\ \pi_j,\ 0,0\bigr)\in\mathbb R^{12},
\]
grouped as
\[
(\text{left pair},\ \text{centre pair},\ \text{right pair},\ \text{position code},\ \text{readout pair}).
\]
This canonical input code is realizable by the actual input layer: choose the input
affine map \(e:\{0,1\}\to\mathbb R^{12}\) as
\[
e(x_j)=(0,0,s(x_j),-s(x_j),0,0,0,0,0,0,0,0),
\]
and the absolute position embedding as
\[
\pi(j)=(0,0,0,0,0,0,\cos\theta_j,-\cos\theta_j,\sin\theta_j,-\sin\theta_j,0,0),
\]
so that \(h_j^{(0)}=e(x_j)+\pi(j)\).

\begin{lemma}[Softmax heads can route a fixed relative neighbour with arbitrarily high mass]
\label{lem:softmax_relative_mass}
Fix width \(W\ge 3\), periodic positions \(j\in\{0,\dots,W-1\}\), and the
circle-code position coordinates \(p_j=(\cos\theta_j,\sin\theta_j)\). For each
offset \(\tau\in\{-1,+1\}\) and every \(\rho\in(0,1)\), there exist query and key
projections for one attention head such that, at every position \(i\), the
attention weight on \(i+\tau\) is at least \(\rho\).
\end{lemma}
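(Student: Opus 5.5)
The plan is to make the attention score depend only on the relative offset via the rotation structure of the circle code, and then take an inverse-temperature parameter large enough to concentrate the softmax. Let $R_\alpha$ denote rotation of $\mathbb R^2$ by angle $\alpha$. Since $p_{j}=(\cos\theta_j,\sin\theta_j)$ and $\theta_{i+\tau}=\theta_i+2\pi\tau/W$, we have $p_{i+\tau}=R_{2\pi\tau/W}\,p_i$. I will choose the query projection to read the position coordinates and output $q_i=\beta\,R_{2\pi\tau/W}\,p_i$, and the key projection to output $k_j=p_j$, with $\beta>0$ to be fixed later. Both maps are linear in the token code: the position code $\pi_j$ contains $\cos\theta_j$ and $\sin\theta_j$ as coordinates, so a linear map can extract $p_j$ and then apply the fixed rotation. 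Any head-dimension scaling $1/\sqrt{d_k}$ is absorbed into $\beta$.

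With these choices the score is $q_i^\top k_j=\beta\,p_{i+\tau}^\top p_j=\beta\cos(\theta_j-\theta_{i+\tau})=\beta\cos\!\bigl(2\pi(j-i-\tau)/W\bigr)$. This depends only on $j-i$ modulo $W$, is maximised uniquely at $j=i+\tau$ with value $\beta$, and every other $j$ satisfies $\cos(\cdot)\le\cos(2\pi/W)$. The unique maximiser follows because the angles $2\pi m/W$ for $m\ne 0 \pmod W$ are bounded away from $0$ modulo $2\pi$. Setting $g:=1-\cos(2\pi/W)>0$, the softmax weight on $i+\tau$ is
\[
a_{i,i+\tau}\ \ge\ \frac{e^{\beta}}{e^{\beta}+(W-1)e^{\beta(1-g)}}=\frac{1}{1+(W-1)e^{-\beta g}}.
\]

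Given $\rho\in(0,1)$, I will pick $\beta\ge g^{-1}\log\bigl((W-1)\rho/(1-\rho)\bigr)$, which makes this bound at least $\rho$ uniformly in $i$, as required. The only step needing care is the uniform gap $g$, which is where $W\ge3$ and periodicity enter. For $W\ge3$ the neighbours $i\pm1$ are distinct from $i$, so the offset is well defined and the gap is strictly positive. The construction is also independent of the value coordinates, so attention routing does not depend on the cell states. I expect no real obstacle beyond confirming the linear realizability of the rotated query from $\pi_j$, which is immediate.
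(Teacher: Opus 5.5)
Your proposal is correct and is essentially the paper's argument: a position-only query proportional to $p_{i+\tau}$ and a key proportional to $p_j$ give the score $\beta\cos(\theta_j-\theta_{i+\tau})$, which is uniquely maximised at $j=i+\tau$ with a uniform gap over the finite position set, and scaling $\beta$ then concentrates the softmax. You go slightly further by writing down the rotation $R_{2\pi\tau/W}$ that makes the query a linear function of $p_i$, and by giving the explicit threshold $\beta\ge g^{-1}\log\bigl((W-1)\rho/(1-\rho)\bigr)$ where the paper only takes the limit $\alpha\to\infty$ (minor aside: $g=1-\cos(2\pi/W)$ is already positive for $W\ge 2$, so $W\ge 3$ is not what secures the gap).
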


\begin{proof}
For offset \(\tau\), take the query at position \(i\) proportional to
\(p_{i+\tau}\) and the key at position \(j\) proportional to \(p_j\). Both maps
ignore the content coordinates and read only the position-code coordinates. Then
the scaled dot-product score has the form
\[
s^{(\tau)}_{ij}=\alpha\,\cos(\theta_j-\theta_{i+\tau})
\]
for some free scale \(\alpha>0\); the standard factor \(1/\sqrt{d_h}\) is absorbed into
\(\alpha\). The score is uniquely maximized at \(j=i+\tau\pmod W\). Since the position
set is finite, there is a uniform positive gap between the target score and the
largest non-target score. Therefore, as \(\alpha\to\infty\), the softmax mass on
\(i+\tau\) tends to \(1\) uniformly in \(i\), so any \(\rho<1\) can be achieved.
\end{proof}

\begin{lemma}[Weighted averages preserve sign margins under a target-mass condition]
\label{lem:weighted_margin}
Let \(0<m\le M\), let \(v_1,\dots,v_n\in[-M,-m]\cup[m,M]\), and let
\(a_1,\dots,a_n\ge 0\) with \(\sum_j a_j=1\). Fix an index \(t\) and suppose
\(a_t\ge \rho\). If
\[
\rho m-(1-\rho)M > 0,
\]
then the weighted average
\[
\bar v:=\sum_{j=1}^n a_j v_j
\]
has the same sign as \(v_t\), and moreover
\[
|\bar v|\ge \rho m-(1-\rho)M.
\]
In particular, if \(m=M=1\), then any \(\rho>1/2\) yields the margin
\(|\bar v|\ge 2\rho-1\).
\end{lemma}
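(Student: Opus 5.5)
The plan is to isolate the target term and bound the rest by the worst case in which every other entry pulls against it. By symmetry we may assume \(v_t>0\), so \(v_t\ge m\). If \(v_t<0\), apply the argument to \(-v_1,\dots,-v_n\), which satisfy the same hypotheses, and flip the sign at the end. Then split
\[
\bar v = a_t v_t + \sum_{j\neq t} a_j v_j .
\]

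Two elementary bounds finish the argument. For the target term, \(a_t v_t \ge \rho m\), since \(a_t\ge\rho\) and \(v_t\ge m>0\). For the remainder, \(|v_j|\le M\) gives \(\sum_{j\ne t} a_j v_j \ge -M\sum_{j\ne t}a_j = -M(1-a_t)\ge -(1-\rho)M\), using \(\sum_j a_j=1\) and \(a_t\ge\rho\). Adding these,
\[
\bar v \ge \rho m-(1-\rho)M>0 .
\]
So \(\bar v\) has the sign of \(v_t\) and satisfies the stated margin. Note that the lower bound \(|v_j|\ge m\) is needed only for \(j=t\); the other entries are merely bounded by \(M\).

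For the special case, set \(m=M=1\). The margin becomes \(\rho-(1-\rho)=2\rho-1\), which is positive exactly when \(\rho>1/2\).

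The only point requiring care is combining \(a_t\ge\rho\) with \(\sum_{j\ne t}a_j=1-a_t\) in the right direction. The target term is bounded using \(a_t\ge\rho\), and the remaining mass is bounded by \(1-\rho\). Both inequalities point the same way, so no case split on the actual value of \(a_t\) is needed. In the subsequent construction, this lemma will be combined with Lemma~\ref{lem:softmax_relative_mass}, which supplies the mass \(\rho\) on the neighbour \(i+\tau\), with \(v_j=\pm1\) the signed neighbour bits.
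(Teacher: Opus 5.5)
Your proposal is correct and follows the paper's proof: both isolate the target term, bound every non-target contribution by the worst case $-M$ on the remaining mass $1-a_t\le 1-\rho$, and handle $v_t<0$ by symmetry. The only cosmetic difference is that the paper writes the bound as $a_t m-(1-a_t)M$ and then uses monotonicity in $a_t$, whereas you bound the two pieces separately; the two are equivalent.
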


\begin{proof}
If \(v_t\ge m\), then the smallest possible value of \(\bar v\) occurs when every
non-target term equals \(-M\), giving
\[
\bar v\ge a_t m-(1-a_t)M\ge \rho m-(1-\rho)M>0.
\]
If \(v_t\le -m\), then the largest possible value of \(\bar v\) occurs when every
non-target term equals \(+M\), giving
\[
\bar v\le -a_t m+(1-a_t)M\le -\bigl(\rho m-(1-\rho)M\bigr)<0.
\]
The claimed lower bound on \(|\bar v|\) follows.
\end{proof}

\begin{lemma}[LayerNorm on antisymmetric pairs]
\label{lem:ln_antisymmetric_pairs}
Let \(\mathrm{LN}\) be LayerNorm with gain \(\gamma=\mathbf 1\), bias \(\beta=\mathbf 0\),
and stabilizer \(\varepsilon_{\mathrm{LN}}>0\). For any
\[
z=(a,-a,b,-b,c,-c,d,-d,e,-e,u,-u)\in\mathbb R^{12},
\]
one has
\[
\mathrm{LN}(z)=\eta(z)\,z
\]
for some scalar \(\eta(z)>0\).
\end{lemma}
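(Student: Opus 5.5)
The plan is to compute the LayerNorm directly from its definition. With unit gain and zero bias,
\[
\mathrm{LN}(z)=\frac{z-\mu(z)\mathbf 1}{\sqrt{v(z)+\varepsilon_{\mathrm{LN}}}},
\qquad
\mu(z)=\frac{1}{12}\sum_{k=1}^{12} z_k,
\qquad
v(z)=\frac{1}{12}\sum_{k=1}^{12}(z_k-\mu(z))^2 .
\]
So the argument only has to handle the mean and the variance.

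\textbf{Step 1: the mean vanishes.} The coordinates of \(z\) come in six pairs \((a,-a),(b,-b),\dots,(u,-u)\). Each pair sums to zero, so \(\sum_k z_k=0\) and \(\mu(z)=0\). The centring step is therefore the identity, \(z-\mu(z)\mathbf 1=z\).

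\textbf{Step 2: the variance is a nonnegative scalar.} Since \(\mu(z)=0\), the variance is
\[
v(z)=\frac{1}{12}\|z\|_2^2=\frac{1}{6}\bigl(a^2+b^2+c^2+d^2+e^2+u^2\bigr)\ge 0.
\]
This gives \(\mathrm{LN}(z)=\eta(z)\,z\) with
\[
\eta(z)=\bigl(v(z)+\varepsilon_{\mathrm{LN}}\bigr)^{-1/2}.
\]
This scalar is well defined and strictly positive because \(v(z)+\varepsilon_{\mathrm{LN}}\ge\varepsilon_{\mathrm{LN}}>0\). In the degenerate case \(z=0\), both sides equal \(0\), so the identity holds for any positive \(\eta\), and the formula above still supplies one.

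There is no real obstacle; the lemma is a two-line computation. The only points needing care are conventions. First, confirm that the paper's LayerNorm uses the population variance (dividing by 12). With the unbiased convention (dividing by 11), \(\eta\) changes but stays positive, so the conclusion is unchanged. Second, \(\varepsilon_{\mathrm{LN}}\) sits inside the square root, which guarantees positivity even at \(z=0\).

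It is also worth noting why the lemma matters downstream. \(\mathrm{LN}\) preserves the antisymmetric-pair structure and all sign patterns, and it rescales only by a positive factor. Sign margins established earlier, for instance via Lemma~\ref{lem:weighted_margin}, therefore survive post-LN up to a positive rescaling.
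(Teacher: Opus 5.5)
Your proof is correct and follows essentially the same route as the paper: the antisymmetric pairs force the mean to zero, so the variance is $\frac{1}{6}(a^2+b^2+c^2+d^2+e^2+u^2)$, and $\eta(z)=\bigl(v(z)+\varepsilon_{\mathrm{LN}}\bigr)^{-1/2}>0$. Your side remarks on the variance convention and the case $z=0$ are harmless extras the paper does not need.
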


\begin{proof}
The mean is zero and the variance is
\[
\frac{a^2+b^2+c^2+d^2+e^2+u^2}{6}.
\]
Substituting into the LayerNorm formula gives
\[
\eta(z)=\left(\frac{a^2+b^2+c^2+d^2+e^2+u^2}{6}+\varepsilon_{\mathrm{LN}}\right)^{-1/2}>0,
\]
where positivity follows from \(\varepsilon_{\mathrm{LN}}>0\).
\end{proof}

\begin{lemma}[A width-$8$ ReLU FFN computes any Boolean rule from sign margins]
\label{lem:relu_any_boolean_rule}
Let \(0<m\le M<2m\). Suppose real numbers \(z_L,z_C,z_R\) satisfy
\[
z_S\in[m,M]\quad\text{when }S=1,
\qquad
z_S\in[-M,-m]\quad\text{when }S=0,
\]
for some \((L,C,R)\in\{0,1\}^3\). Choose any threshold \(\theta\) with
\[
2M-m<\theta<3m.
\]
For each sign pattern \(\sigma=(\sigma_L,\sigma_C,\sigma_R)\in\{-1,+1\}^3\), define
\[
u_\sigma
=
\operatorname{ReLU}(\sigma_L z_L+\sigma_C z_C+\sigma_R z_R-\theta).
\]
Then exactly one hidden unit is positive, namely the one with
\[
\sigma=(s(L),s(C),s(R)).
\]
Moreover, for that active unit one has
\[
u_\sigma\ge 3m-\theta>0.
\]
Consequently, for any Boolean rule \(f:\{0,1\}^3\to\{0,1\}\), if we write
\[
b(\sigma):=\left(\frac{\sigma_L+1}{2},\frac{\sigma_C+1}{2},\frac{\sigma_R+1}{2}\right)
\]
and define
\[
g_f(z_L,z_C,z_R)
:=
\sum_{\sigma\in\{-1,+1\}^3}
\bigl(2f(b(\sigma))-1\bigr)\,u_\sigma,
\]
then
\[
g_f(z_L,z_C,z_R)>0 \iff f(L,C,R)=1,
\qquad
g_f(z_L,z_C,z_R)<0 \iff f(L,C,R)=0.
\]
\end{lemma}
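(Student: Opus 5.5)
The plan is a direct case analysis on how the sign pattern $\sigma$ compares with the true pattern $\sigma^\ast:=(s(L),s(C),s(R))$. The key observation is that $\sigma_S z_S$ is positive exactly when $\sigma_S$ agrees with $s(S)$. Indeed, the hypothesis says $z_S$ has sign $s(S)$ and magnitude in $[m,M]$. Hence $\sigma_S z_S\in[m,M]$ if $\sigma_S=s(S)$, and $\sigma_S z_S\in[-M,-m]$ otherwise. So the pre-activation $\sigma\cdot z-\theta$ is controlled by the number $k$ of coordinates where $\sigma$ disagrees with $\sigma^\ast$.

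For the active unit ($k=0$), all three terms are at least $m$. This gives $\sigma^\ast\cdot z\ge 3m$, so $u_{\sigma^\ast}\ge 3m-\theta>0$ by the choice $\theta<3m$. For $k\ge1$, I bound the sum from above: at most $3-k$ terms are at most $M$, and $k$ terms are at most $-m$. So
\[
\sigma\cdot z\le (3-k)M-km .
\]
This bound is decreasing in $k$, so the worst case is $k=1$, giving $2M-m$. Since $\theta>2M-m$, the pre-activation is negative and the ReLU outputs $0$. This establishes that exactly the unit with $\sigma=\sigma^\ast$ is positive, together with the stated margin.

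The hypothesis $M<2m$ is exactly what makes the interval $(2M-m,3m)$ nonempty, because $2M-m<3m\iff M<2m$. So a valid $\theta$ exists. I note this but it is not needed logically, since $\theta$ is given.

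For the consequence, the sum $g_f$ collapses to the single term $(2f(b(\sigma^\ast))-1)\,u_{\sigma^\ast}$, since all other $u_\sigma$ vanish. It remains to check that $b(\sigma^\ast)=(L,C,R)$, using $(s(b)+1)/2=b$. Then $g_f$ equals $+u_{\sigma^\ast}>0$ when $f(L,C,R)=1$ and $-u_{\sigma^\ast}<0$ when $f(L,C,R)=0$. Both biconditionals follow because the two cases are exhaustive and the signs differ.

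There is no real obstacle here. The only care needed is in the $k\ge1$ upper bound: I must use the worst case where the agreeing coordinates reach $M$, rather than assuming they equal $m$. This is where the asymmetric threshold $2M-m$ comes from.
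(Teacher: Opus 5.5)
Your proposal is correct and matches the paper's proof. Both single out $\sigma^\star=(s(L),s(C),s(R))$, bound its pre-activation below by $3m$, bound every other pre-activation above by $2M-m<\theta$ because at least one sign is wrong, and collapse $g_f$ to the single active term; your $k$-disagreement bound $(3-k)M-km$ and the check $b(\sigma^\star)=(L,C,R)$ only spell out steps the paper leaves implicit.
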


\begin{proof}
Let \(\sigma^\star=(s(L),s(C),s(R))\). Then
\[
\sigma^\star_L z_L+\sigma^\star_C z_C+\sigma^\star_R z_R\ge 3m,
\]
so \(u_{\sigma^\star}\ge 3m-\theta>0\). If \(\sigma\neq\sigma^\star\), then at least one
sign is wrong, so the corresponding signed sum contains at most two terms from
\([m,M]\) and at least one term from \([-M,-m]\). Hence
\[
\sigma_L z_L+\sigma_C z_C+\sigma_R z_R\le 2M-m<\theta,
\]
so \(u_\sigma=0\). Therefore exactly one detector is active, and \(g_f\) takes the sign
of the desired output label.
\end{proof}

\begin{theorem}[A single standard transformer block implements any radius-$1$ Boolean rule]
\label{thm:any_local_rule_single_block}
Fix width \(W\ge3\) with periodic boundary conditions. Consider a post-LN
transformer encoder block in evaluation mode with learned absolute position
embeddings, multi-head softmax self-attention, residual connections, LayerNorm, a
ReLU FFN, and a final affine scalar readout. Assume model dimension at least
\(12\), at least \(2\) attention heads, and FFN width at least \(8\). Then for
every Boolean rule \(f:\{0,1\}^3\to\{0,1\}\), there exists a parameter setting such
that for every binary input row \(x\in\{0,1\}^W\) and every position \(i\),
\[
\ell_i(x)>0
\quad\Longleftrightarrow\quad
f(x_{i-1},x_i,x_{i+1})=1,
\]
with indices interpreted modulo \(W\). Thus thresholding the final logit at \(0\)
computes the radius-$1$ rule \(f\) exactly.
\end{theorem}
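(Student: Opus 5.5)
The plan is to build the block explicitly on the canonical $12$-dimensional token code $h_j^{(0)}$ and track how each sublayer acts on it. Attention copies the neighbour bits into the left and right pairs. The first LayerNorm only rescales. The FFN is the width-$8$ detector bank of Lemma~\ref{lem:relu_any_boolean_rule}, writing into the readout pair. The second LayerNorm again only rescales, and the readout takes the first readout coordinate. Throughout, every vector is kept in the antisymmetric pair form of Lemma~\ref{lem:ln_antisymmetric_pairs}, so each post-LN step is multiplication by a positive scalar and preserves signs. If the model dimension exceeds $12$, all extra coordinates are set to zero. This only changes the constant $6$ in the variance formula of Lemma~\ref{lem:ln_antisymmetric_pairs}, which is harmless.

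\textbf{Attention.} Use two heads, one for each offset $\tau\in\{-1,+1\}$. By Lemma~\ref{lem:softmax_relative_mass}, the queries and keys read only the position code $\pi_j$ (the $(\cos,\sin)$ coordinates), and each head puts mass at least $\rho$ on $i+\tau$, uniformly over positions. Each head's value/output projection reads the centre pair $(s(x_j),-s(x_j))$. Head $\tau=-1$ writes it into the left pair and head $\tau=+1$ into the right pair, with nothing written elsewhere. After the residual, position $i$ therefore carries the following:
\begin{itemize}
\item $(a,-a)$ in the left pair, where $a$ is a softmax-weighted average of values $\pm1$;
\item the exact centre pair $(s(x_i),-s(x_i))$;
\item $(c,-c)$ in the right pair;
\item its position code;
\item zeros in the readout pair.
\end{itemize}
Lemma~\ref{lem:weighted_margin} with $m=M=1$ gives $\operatorname{sign}(a)=s(x_{i-1})$ and $2\rho-1\le|a|\le1$, and likewise for $c$.

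\textbf{First LayerNorm (main obstacle).} LayerNorm with unit gain and zero bias multiplies the whole vector by $\eta>0$. The difficulty is that the FFN threshold $\theta$ is a fixed number, while $\eta$ depends on the input. I will handle this by bounding $\eta$ uniformly. The squared-magnitude sum is $a^2+1+c^2+(\cos^2\theta_i+\sin^2\theta_i)=a^2+c^2+2$, which lies in $[2(2\rho-1)^2+2,\ 4]$. Hence $\eta\in[\eta_{\min},\eta_{\max}]$, with $\eta_{\max}/\eta_{\min}\to1$ as $\rho\to1$. The three FFN inputs are $z_L=\eta a$, $z_C=\eta s(x_i)$ and $z_R=\eta c$. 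They satisfy the margin hypothesis of Lemma~\ref{lem:relu_any_boolean_rule} with
\[
m=\eta_{\min}(2\rho-1),\qquad M=\eta_{\max}.
\]
The condition $M<2m$ holds once $\rho$ is close enough to $1$, since $M/m\to1$. Fix such a $\rho$ via Lemma~\ref{lem:softmax_relative_mass}, then choose $\theta\in(2M-m,3m)$, which is a nonempty interval because $M<2m$.

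\textbf{FFN, second LayerNorm and readout.} The FFN's first layer computes the eight units $u_\sigma$ from $z_L,z_C,z_R$ (read off the first coordinate of each pair). Its second layer writes $(g_f,-g_f)$ into the readout pair and zero elsewhere. After the residual the vector remains in antisymmetric pair form, so the second LayerNorm multiplies it by some $\eta'>0$. The affine readout $\ell_i=$ (first readout coordinate), with zero bias, then equals $\eta' g_f$. By Lemma~\ref{lem:relu_any_boolean_rule}, $\ell_i>0$ exactly when $f(x_{i-1},x_i,x_{i+1})=1$, and $\ell_i<0$ otherwise.

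Beyond the uniform control of $\eta$, the only remaining checks are routine: the attention and FFN writes must not disturb the antisymmetric pair structure, and all indices are taken modulo $W$.
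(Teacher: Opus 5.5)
Your proposal is correct and follows the paper's proof essentially step for step. It uses two position-only heads that route the centre pair into the left and right pairs, Lemma~\ref{lem:weighted_margin} for the sign margins, LayerNorms on antisymmetric-pair vectors acting as positive rescalings, the width-$8$ detector FFN of Lemma~\ref{lem:relu_any_boolean_rule} writing into the readout pair, and a sign-preserving final readout. The only real difference is that you bound the first LayerNorm factor explicitly, using the fact that the circle code contributes $\cos^2\theta_i+\sin^2\theta_i=1$, so the squared sum $a^2+c^2+2$ lies in $[2(2\rho-1)^2+2,\,4]$. The paper instead argues by compactness and uniform convergence to a constant-norm codebook. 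Your version makes the choice of $\rho$ and $\theta$ concrete, and it also covers model dimension above $12$ by zero padding.
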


\begin{proof}
Use the canonical input code \(h_j^{(0)}\) above and fix any \(\rho\in(1/2,1)\), to be
chosen sufficiently close to \(1\).

Use two attention heads in the first sublayer. The left head uses
Lemma~\ref{lem:softmax_relative_mass} with offset \(-1\), and the right head uses it
with offset \(+1\). In both heads, the query and key maps ignore content and read only
the position-code coordinates, while the value projection reads only the centre sign
pair \((s(x_j),-s(x_j))\). The multi-head output projection writes the left-head output
into the left pair and the right-head output into the right pair. The residual stream
already contains the centre pair and the position code. Thus, after the
attention-residual sublayer, the token at position \(i\) has the form
\[
r_i
=
(\alpha_i,-\alpha_i,\ s(x_i),-s(x_i),\ \beta_i,-\beta_i,\ \pi_i,\ 0,0),
\]
where
\[
\alpha_i=\sum_j a^{(-1)}_{ij}s(x_j),
\qquad
\beta_i=\sum_j a^{(+1)}_{ij}s(x_j),
\]
and the target weights satisfy
\[
a^{(-1)}_{i,i-1}\ge \rho,
\qquad
a^{(+1)}_{i,i+1}\ge \rho.
\]
Applying Lemma~\ref{lem:weighted_margin} with \(m=M=1\), we obtain
\[
\operatorname{sgn}(\alpha_i)=s(x_{i-1}),
\qquad
\operatorname{sgn}(\beta_i)=s(x_{i+1}),
\qquad
|\alpha_i|,|\beta_i|\ge 2\rho-1.
\]

Choose the first LayerNorm gains to be \(\mathbf 1\) and biases to be \(\mathbf 0\).
By Lemma~\ref{lem:ln_antisymmetric_pairs}, the post-attention LayerNorm output is
\[
z_i=\lambda_i r_i
\]
for some scalar \(\lambda_i>0\). Therefore the three readable coordinates
\[
z_L:=z_{i,1},
\qquad
z_C:=z_{i,3},
\qquad
z_R:=z_{i,5}
\]
have the correct signs for \((x_{i-1},x_i,x_{i+1})\).

Because
\[
(\alpha_i,s(x_i),\beta_i)
\in
([-(1),-(2\rho-1)]\cup[2\rho-1,1])\times\{-1,+1\}\times([-(1),-(2\rho-1)]\cup[2\rho-1,1]),
\]
and because \(\lambda_i\) is a continuous positive function of these values,
compactness yields constants \(0<m\le M<\infty\) such that
\[
z_S\in[m,M]\quad\text{when }S=1,
\qquad
z_S\in[-M,-m]\quad\text{when }S=0.
\]
Moreover, as \(\rho\to 1\), the admissible triples \((\alpha_i,s(x_i),\beta_i)\)
converge uniformly to the exact codebook \((\pm1,\pm1,\pm1)\). The corresponding
pre-LayerNorm vectors therefore converge uniformly to a finite codebook with constant
squared norm, so the LayerNorm scaling factors converge uniformly to a common
positive constant. Hence \(M/m\to 1\) as \(\rho\to 1\), and for \(\rho\) sufficiently
close to \(1\) we have \(M<2m\).

Now apply Lemma~\ref{lem:relu_any_boolean_rule} to \((z_L,z_C,z_R)\). This gives a
width-$8$ hidden layer whose scalar output \(g_i\) satisfies
\[
g_i>0 \iff f(x_{i-1},x_i,x_{i+1})=1,
\qquad
g_i<0 \iff f(x_{i-1},x_i,x_{i+1})=0.
\]
Choose the FFN output projection to write only into the readout pair:
\[
(0,0,0,0,0,0,0,0,0,0,Bg_i,-Bg_i)
\]
for some fixed \(B>0\). After the FFN residual, the pre-final-LayerNorm token is
\[
v_i
=
(z_L,-z_L,z_C,-z_C,z_R,-z_R,\lambda_i\pi_i,Bg_i,-Bg_i),
\]
which is again an antisymmetric-pair vector. Choosing the final LayerNorm gain and bias
as \(\mathbf 1\) and \(\mathbf 0\), Lemma~\ref{lem:ln_antisymmetric_pairs} implies that
this final LayerNorm multiplies \(v_i\) by a positive scalar. Therefore the sign of the
readout pair is still the sign of \(g_i\). A final linear readout taking the difference
of the two readout coordinates gives the scalar logit \(\ell_i\), and hence
\[
\ell_i(x)>0 \iff f(x_{i-1},x_i,x_{i+1})=1.
\]
This proves the claim.
\end{proof}

\begin{corollary}[Rule~150 as a special case]
\label{cor:rule150_single_block}
Taking \(f(L,C,R)=L\oplus C\oplus R\) in
Theorem~\ref{thm:any_local_rule_single_block} yields an explicit exact Rule~150
circuit. Moreover, for Rule~150 the FFN width can be reduced from \(8\) to \(4\),
since only the four odd-parity patterns \(100,010,001,111\) require positive
detectors; a strictly negative final-readout bias separates zero even-parity
detector output from positive odd-parity output.
\end{corollary}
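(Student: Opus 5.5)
The first claim is immediate: instantiate Theorem~\ref{thm:any_local_rule_single_block} with $f(L,C,R)=L\oplus C\oplus R$. That theorem already yields a post-LN block whose final logit $\ell_i$ is positive exactly when $x_{i-1}\oplus x_i\oplus x_{i+1}=1$. For the width reduction I will reuse the entire construction up to the FFN unchanged: the attention routing of Lemma~\ref{lem:softmax_relative_mass}, the margin bound of Lemma~\ref{lem:weighted_margin}, and the LayerNorm scaling of Lemma~\ref{lem:ln_antisymmetric_pairs}. This gives coordinates $(z_L,z_C,z_R)$ with $z_S\in[m,M]$ when the bit is $1$ and $z_S\in[-M,-m]$ otherwise, where $M<2m$ once $\rho$ is close to $1$. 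Only the FFN and readout are changed.

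Fix $\theta$ with $2M-m<\theta<3m$. I keep only the four detectors $u_\sigma=\operatorname{ReLU}(\sigma\cdot z-\theta)$ for $\sigma\in\{(+,-,-),(-,+,-),(-,-,+),(+,+,+)\}$; these are the sign patterns of $100,010,001,111$. The FFN output is $g'_i:=\sum_{\text{odd }\sigma}u_\sigma$. By Lemma~\ref{lem:relu_any_boolean_rule}, exactly one of the eight detectors is active, namely the one matching the true pattern, and its value is at least $3m-\theta>0$. So if the pattern has odd parity, $g'_i\ge 3m-\theta$. If it has even parity, the active detector has been discarded and $g'_i=0$.

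The remaining step is to get a strict sign. I want to subtract a constant $c\in(0,3m-\theta)$, giving $\ell_i\propto g'_i-c$. The subtlety, which I expect to be the main obstacle, is that the final LayerNorm sits after the FFN residual. A bias added inside the FFN output would break the antisymmetric-pair structure needed for Lemma~\ref{lem:ln_antisymmetric_pairs}. The cleaner route is to write $(Bg'_i,-Bg'_i)$ into the readout pair, exactly as in Theorem~\ref{thm:any_local_rule_single_block}. The final LayerNorm then multiplies the whole vector by a positive but $i$-dependent factor $\eta_i$, so the readout coordinate becomes $\eta_iBg'_i$.

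Because of this varying factor, a fixed negative readout bias $-b$ only works if $\eta_i$ is bounded below uniformly. It is, by compactness: the pre-LN vectors range over a bounded set, which gives $\eta_i\ge\eta_{\min}>0$. For odd parity the logit is therefore at least $\eta_{\min}B(3m-\theta)-b$, and for even parity it equals $-b$ exactly. Choosing $0<b<\eta_{\min}B(3m-\theta)$ makes the logit positive exactly on odd-parity patterns. The final readout is then the difference of the readout pair plus the bias $-b$, which is the ``strictly negative final-readout bias'' in the statement.
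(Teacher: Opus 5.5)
Your proposal is correct and takes the paper's route: keep only the four odd-parity detectors from Lemma~\ref{lem:relu_any_boolean_rule}, write their sum into the readout pair, and let a strictly negative readout bias separate the exactly-zero even-parity output from the strictly positive odd-parity output. Your uniform lower bound on the final LayerNorm factor is a detail the paper leaves implicit, and it holds even more simply because for fixed $W$ there are only finitely many input rows and positions; also, contrary to your aside, an antisymmetric FFN output bias $(-Bc,+Bc)$ on the readout pair keeps the form required by Lemma~\ref{lem:ln_antisymmetric_pairs}, so that alternative would also work and would not need the bound.
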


\begin{remark}[Interpretation]
Theorem~\ref{thm:any_local_rule_single_block} is a block-level architectural
existence result. A standard transformer block already contains a complete
beyond-interpolation local-rule primitive: softmax attention performs smooth local
routing, and the ReLU FFN performs the non-affine Boolean computation. Experiment~1
uses embedding dimension \(64\) and \(4\) heads, so the result applies a fortiori to
the architecture class studied in the main experiments. This is a capacity statement
about the architecture, not a claim that SGD must recover the same parameters in
every trained model.
\end{remark}

\subsection{Composable local-rule circuits across depth}
\label{app:compositional_local_rule_circuits}

Theorem~\ref{thm:any_local_rule_single_block} is the complete constructive result
used for the main claim. The following optional extension records why the same
routing-plus-FFN primitive can be re-used across blocks; it is included to clarify
the architectural interpretation, not as a dependency of the main empirical or
theoretical results.

For this extension it is convenient to use a larger, induction-friendly absolute-position
code. Fix depth \(T\ge 1\) and width \(W\ge 3\). Let
\[
u_i^{(0)}:=x_i,
\qquad
u_i^{(t)}:=f_t\bigl(u_{i-1}^{(t-1)},u_i^{(t-1)},u_{i+1}^{(t-1)}\bigr)
\qquad (t=1,\dots,T),
\]
for arbitrary Boolean rules \(f_t:\{0,1\}^3\to\{0,1\}\).

We use the signed one-hot position code
\[
q_j:=(e_j,-e_j)\in\mathbb R^{2W},
\]
where \(e_j\) is the \(j\)-th standard basis vector of \(\mathbb R^W\). Group the model
coordinates as
\[
(\text{position code},\ \text{left scratch pair},\ \text{state pair},\ \text{right scratch pair})
\in \mathbb R^{2W}\oplus\mathbb R^2\oplus\mathbb R^2\oplus\mathbb R^2.
\]
Thus model dimension \(2W+6\) suffices. Again this code is realizable by the actual
input layer: choose the input affine map
\[
e_{\mathrm{comp}}(x_i)=(0_{2W},0,0,s(x_i),-s(x_i),0,0)
\]
and the absolute position embedding
\[
\pi_{\mathrm{comp}}(i)=(q_i,0,0,0,0,0,0),
\]
so that the compositional input token equals \(e_{\mathrm{comp}}(x_i)+\pi_{\mathrm{comp}}(i)\).

\begin{lemma}[A width-$14$ ReLU FFN can update the state and clear the scratch pairs]
\label{lem:relu_update_and_clear}
Assume \((z_L,z_C,z_R)\) satisfy the sign-margin hypothesis of
Lemma~\ref{lem:relu_any_boolean_rule}. Then for any Boolean rule
\(f:\{0,1\}^3\to\{0,1\}\) and any scalar \(B>0\), there exists a one-hidden-layer ReLU
FFN of width \(14\) whose output simultaneously:
\begin{enumerate}
\item writes \((-z_L,+z_L)\) into the left scratch pair,
\item writes \((-z_C+B g_f(z_L,z_C,z_R),\ z_C-B g_f(z_L,z_C,z_R))\) into the state pair,
\item writes \((-z_R,+z_R)\) into the right scratch pair,
\end{enumerate}
where \(g_f\) is the rule logit from Lemma~\ref{lem:relu_any_boolean_rule}. Consequently,
when this FFN output is added via the residual connection to a token whose three relevant
pairs are \((z_L,-z_L)\), \((z_C,-z_C)\), and \((z_R,-z_R)\), it clears the two scratch
pairs, replaces the old state pair by \((B g_f(z_L,z_C,z_R),-B g_f(z_L,z_C,z_R))\), and
leaves the sign of the new state equal to the rule output.
\end{lemma}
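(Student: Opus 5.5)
The construction uses 8 rule detectors plus 6 units that pass the three inputs through; together that is width 14. The key fact is that a ReLU can represent the identity as $z=\operatorname{ReLU}(z)-\operatorname{ReLU}(-z)$. So two hidden units per coordinate reproduce each of $z_L,z_C,z_R$ exactly, on whatever sign they carry. I will take the first 8 hidden units to be the detectors $u_\sigma=\operatorname{ReLU}(\sigma_Lz_L+\sigma_Cz_C+\sigma_Rz_R-\theta)$ of Lemma~\ref{lem:relu_any_boolean_rule}, with the same threshold $\theta\in(2M-m,3m)$. The remaining 6 units are $\operatorname{ReLU}(\pm z_L)$, $\operatorname{ReLU}(\pm z_C)$ and $\operatorname{ReLU}(\pm z_R)$. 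The first-layer weights read the coordinates $z_{i,1},z_{i,3},z_{i,5}$ of the LayerNorm output, and the first-layer biases are $-\theta$ for the detectors and $0$ for the passthrough units.

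Next I will specify the output projection $W_2$ coordinate by coordinate; its bias is zero. For the left scratch pair, the first coordinate gets weight $-1$ on $\operatorname{ReLU}(z_L)$ and $+1$ on $\operatorname{ReLU}(-z_L)$, which yields $-z_L$. The second coordinate gets the opposite weights and yields $+z_L$. The right scratch pair is handled the same way with $z_R$. For the state pair, the first coordinate combines the $z_C$ passthrough units to give $-z_C$, and adds weight $B(2f(b(\sigma))-1)$ on each detector $u_\sigma$. By the definition of $g_f$ in Lemma~\ref{lem:relu_any_boolean_rule}, this coordinate equals $-z_C+Bg_f$. The second coordinate uses the negated weights and equals $z_C-Bg_f$. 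The position-code coordinates get zero output weights, so the FFN leaves them untouched.

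Finally, I will add this output to the residual token with pairs $(z_L,-z_L)$, $(z_C,-z_C)$ and $(z_R,-z_R)$. The left and right pairs sum to $(0,0)$, so the scratch is cleared. The state pair becomes $(Bg_f,-Bg_f)$. Lemma~\ref{lem:relu_any_boolean_rule} gives that $g_f>0$ exactly when $f(L,C,R)=1$ and $g_f<0$ exactly when $f(L,C,R)=0$. Since $B>0$, the sign of the new state therefore equals the rule output.

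None of these steps is hard; the main point needing care is bookkeeping. Items~1--3 of the lemma must hold exactly on every input, and they do, because the ReLU identity decomposition is exact with no margin assumption, while the margin hypothesis is used only for the detectors. I also have to check that the residual token indeed carries the pairs $(z_S,-z_S)$. This depends on how the post-LN architecture defines what the FFN reads and what it adds to. I will state the lemma relative to the token the FFN reads, and note that by Lemma~\ref{lem:ln_antisymmetric_pairs} LayerNorm only rescales antisymmetric pairs, so the pair structure is preserved.
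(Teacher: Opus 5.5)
Your construction is correct and matches the paper's proof exactly. It uses the eight detectors $u_\sigma$ from Lemma~\ref{lem:relu_any_boolean_rule} plus the six sign-split units $\operatorname{ReLU}(\pm z_S)$, and the identity $z=\operatorname{ReLU}(z)-\operatorname{ReLU}(-z)$ lets a linear output projection emit the cancellation terms together with the fresh state pair $Bg_f$. Your bookkeeping is more explicit than the paper's, but note one indexing slip: in the compositional layout the scratch and state pairs sit after the $2W$ position coordinates, so the first layer should read coordinates $2W+1$, $2W+3$, $2W+5$ rather than $1,3,5$.
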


\begin{proof}
Use the eight hidden units \(u_\sigma\) from Lemma~\ref{lem:relu_any_boolean_rule} to
compute \(g_f\). In addition, use the six sign-split units
\[
\operatorname{ReLU}(z_L),\ \operatorname{ReLU}(-z_L),\ \operatorname{ReLU}(z_C),\ \operatorname{ReLU}(-z_C),\ \operatorname{ReLU}(z_R),\ \operatorname{ReLU}(-z_R).
\]
From these one recovers
\[
z_L=\operatorname{ReLU}(z_L)-\operatorname{ReLU}(-z_L),
\qquad
z_C=\operatorname{ReLU}(z_C)-\operatorname{ReLU}(-z_C),
\qquad
z_R=\operatorname{ReLU}(z_R)-\operatorname{ReLU}(-z_R).
\]
A linear output projection can therefore emit the three pairwise cancellation terms
together with the fresh rule-dependent state pair.
\end{proof}

\begin{proposition}[Composable depth-$T$ local-rule circuit]
\label{prop:compositional_local_rules_depth}
For every fixed width \(W\), depth \(T\), and Boolean rules
\(f_1,\dots,f_T:\{0,1\}^3\to\{0,1\}\), there exists a post-LN \(T\)-block transformer
encoder in evaluation mode with learned absolute position embeddings, multi-head
softmax self-attention, residual connections, LayerNorm, ReLU FFNs, and a final
affine scalar readout such that, for every binary input row \(x\in\{0,1\}^W\) and every
position \(i\), the final logit satisfies
\[
\ell_i(x)>0
\quad\Longleftrightarrow\quad
u_i^{(T)}=1.
\]
It is enough to use model dimension \(2W+6\), two attention heads per block, and FFN
width \(14\) per block.
\end{proposition}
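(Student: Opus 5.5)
The plan is an induction on the block index $t$. We maintain a canonical invariant on the residual stream entering block $t$ (after the previous block's final LayerNorm). At every position $i$ the token should have the form
\[
h_i^{(t-1)}=\bigl(c_{t-1}\,q_i,\ 0,0,\ \zeta_i,-\zeta_i,\ 0,0\bigr),
\]
with $\operatorname{sgn}(\zeta_i)=s(u_i^{(t-1)})$. We also require the two-sided bound $|\zeta_i|\in[m_{t-1},M_{t-1}]$, the constant $c_{t-1}>0$ to be known, and the ratio $M_{t-1}/m_{t-1}$ to be close to $1$. At $t=1$ this holds exactly with $c_0=1$ and $\zeta_i=s(x_i)$, by the realizable input code $e_{\mathrm{comp}}+\pi_{\mathrm{comp}}$.

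Each block then repeats the single-block construction of Theorem~\ref{thm:any_local_rule_single_block}, adapted to this coordinate layout.

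\textbf{Attention.} Two heads use query and key maps that read only the position code. With the one-hot code, the query at $i$ is a multiple of $e_{i+\tau}$ and the key at $j$ is $e_j$. The score is then $\alpha\,c_{t-1}^2\,\mathbf 1[j=i+\tau]$, which gives a unit gap, so target mass at least $\rho$ follows directly, as in Lemma~\ref{lem:softmax_relative_mass}. The value map reads the state pair, and the output projection writes into the left and right scratch pairs.

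\textbf{Margins.} Lemma~\ref{lem:weighted_margin}, now with general $m_{t-1}\le M_{t-1}$, gives scratch values with the correct neighbour signs. Their magnitudes are at least $\rho m_{t-1}-(1-\rho)M_{t-1}$.

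\textbf{First LayerNorm.} Every group is an antisymmetric pair, and the position code $(e_j,-e_j)$ is itself a concatenation of antisymmetric pairs. Lemma~\ref{lem:ln_antisymmetric_pairs} therefore extends verbatim to dimension $2W+6$, and the LayerNorm acts as a positive scalar $\lambda_i$. Signs are preserved.

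\textbf{FFN.} Lemma~\ref{lem:relu_update_and_clear} applies, giving width $14$. It clears the scratch pairs and replaces the state pair by $(Bg_f,-Bg_f)$, whose sign is $s(u_i^{(t)})$. The position code is left as $\lambda_i q_i$.

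\textbf{Second LayerNorm.} This again multiplies the token by a positive scalar, so we recover the invariant form with the state pair updated.

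The main obstacle is re-establishing the invariant quantitatively. Two things must be shown. First, the position-code scale after the block must be essentially constant across positions; otherwise the next block's attention scores vary with $i$. Second, the new state magnitudes $|\zeta_i|$ must again satisfy $M_t<2m_t$ after two position-dependent LayerNorm rescalings.

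I would handle both by a limiting-codebook argument, as in the proof of Theorem~\ref{thm:any_local_rule_single_block}. As $\rho\to1$, the post-attention vectors converge uniformly to a finite codebook with constant norm, so $\lambda_i$ converges uniformly to a single constant. The values $g_f$ then take only finitely many limiting values, one per input pattern. To make $|g_f|$ nearly pattern-independent, I would rescale each output weight of $g_f$ by the reciprocal of the limiting value $u_{\sigma}$ on its own codeword; exactly one detector is active, so this is possible. The final-LN scale is then nearly uniform as well.

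For the attention itself, only a uniform positive lower bound on the position-code scale is needed, since the score gap can be amplified by $\alpha$. Robustness of the score to small scale fluctuations follows from the one-hot gap: the target score is $\alpha c^2$, and all other scores are $0$.

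Because $T$ is finite, we choose $\rho$ and $\alpha$ block by block, backward from the final block's requirements, so that every block's margin condition holds. Finally, a readout on the state-pair difference at depth $T$ gives $\ell_i>0\iff u_i^{(T)}=1$.
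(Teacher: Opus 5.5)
Your proposal is correct. It follows the paper's skeleton: the same token layout, position-only queries and keys on the signed one-hot code, LayerNorm acting as a positive scalar on zero-mean tokens, the width-$14$ update-and-clear FFN of Lemma~\ref{lem:relu_update_and_clear}, and a state-pair readout. Where it differs is in how the quantitative invariant is restored after each block.

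\textbf{The paper's route.} The paper carries a loose invariant: tokenwise position scales $\lambda_{i,t}$ lie in a compact positive set, and state magnitudes lie in $[m_t,M_t]$. It restores uniformity inside each block by taking the FFN output gain $B_t$ large. The fresh state pair then dominates the final LayerNorm, so its normalised magnitude converges uniformly to a constant, whatever the spread of $|g_{i,t}|$ or of the position scales. Parameters are therefore fixed forward, block by block, and what block $t$ needs depends only on constants from block $t-1$.

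\textbf{Your route.} You never use a large $B_t$. In effect you compare the network with the ideal hard-routing circuit, in which every stage has constant position scale and constant state magnitude. You then choose all attention sharpnesses, propagating tolerances backward from block $T$, so that every stage stays near that finite codebook. This is conceptually the simplest argument: finitely many inputs, and a perturbation of an exact circuit. The cost is that nothing in your scheme resets the deviation between blocks. Tolerances compound, so the sharpness required in early blocks may grow with $T$.

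\textbf{Adjustments to your write-up.}
\begin{itemize}
\item The invariant with an exact constant $c_{t-1}$ is not preserved. It should read ``within $\delta_{t-1}$ of the ideal codeword.'' As you note yourself, attention only needs a positive lower bound on the position scale.
\item The post-attention vectors converge to a constant-norm codebook as $\rho\to1$ only if the earlier blocks are also near-ideal. This does not follow from letting $\rho_t\to1$ alone.
\item Rescaling the FFN output weights is harmless but unnecessary. In the ideal codebook every active detector already takes the same value: the sum of three equal magnitudes minus $\theta$.
\end{itemize}

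Finally, your demand that $M_{t-1}/m_{t-1}$ be close to $1$, rather than merely below $2$, is the right strength. The post-attention LayerNorm couples the three slots through a single per-token scale, so $M_{t-1}/m_{t-1}<2$ alone does not give $M^{\mathrm{in}}_t<2m^{\mathrm{in}}_t$. In the paper this stronger property comes from the $B_t\to\infty$ limit, which drives the ratio to $1$.
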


\begin{proof}[Proof sketch]
Initialize token \(i\) as
\[
h_i^{(0)}=(q_i,\ 0,0,\ s(x_i),-s(x_i),\ 0,0).
\]
We maintain the invariant that after block \(t\) and its final LayerNorm, token \(i\)
has the form
\[
h_i^{(t)}=(\lambda_{i,t} q_i,\ 0,0,\ \gamma_{i,t}s(u_i^{(t)}),-\gamma_{i,t}s(u_i^{(t)}),\ 0,0),
\]
where \(\lambda_{i,t}>0\) and \(\gamma_{i,t}\in[m_t,M_t]\) for some constants
\(0<m_t\le M_t<\infty\). Moreover, at each stage we choose the block parameters so that
\(M_t/m_t<2\).

The base case \(t=0\) is immediate with \(m_0=M_0=1\). Assume the invariant at stage
\(t-1\). In block \(t\), both queries and keys depend only on the position subspace.
Let \(P_{-1}\) and \(P_{+1}\) be the cyclic shift operators on \(\mathbb R^W\). Since
\[
P_{\tau}(e_i,-e_i)=(e_{i+\tau},-e_{i+\tau})
\qquad (\tau\in\{-1,+1\}),
\]
a left-routing head can query \(q_{i-1}\) and a right-routing head can query
\(q_{i+1}\). The position code survives every LayerNorm up to positive tokenwise
rescaling, so each head still has a unique target position in every row. Because the
possible LayerNorm scales form a compact positive set, choosing the head scale large
enough makes the target weight at least \(\rho_t\) uniformly in \(i\); again, the
standard factor \(1/\sqrt{d_h}\) is absorbed into this free head scale.

Let both value projections read only the current state pair. The multi-head output
projection writes the left head into the left scratch pair and the right head into the
right scratch pair. By Lemma~\ref{lem:weighted_margin}, if
\[
\rho_t m_{t-1}-(1-\rho_t)M_{t-1}>0,
\]
then the two routed scratch pairs have the correct neighbour signs and positive margin.
The residual stream already carries the centre state pair. After the post-attention
LayerNorm, the three readable scalars \((z_L,z_C,z_R)\) therefore satisfy the sign-margin
hypothesis of Lemma~\ref{lem:relu_any_boolean_rule} for suitable constants
\(0<m_t^{\mathrm{in}}\le M_t^{\mathrm{in}}<\infty\). By choosing \(\rho_t\) sufficiently
close to \(1\), one can ensure \(M_t^{\mathrm{in}}<2m_t^{\mathrm{in}}\).

Now apply Lemma~\ref{lem:relu_update_and_clear} with rule \(f_t\). The FFN clears the
left and right scratch pairs, cancels the old state pair, and writes a fresh state pair
\((B_t g_{i,t},-B_t g_{i,t})\), where the sign of \(g_{i,t}\) corresponds to
\(s(u_i^{(t)})\). Thus, before the final LayerNorm in block \(t\), the token has the form
\[
(\widetilde\lambda_{i,t} q_i,\ 0,0,\ B_t g_{i,t},-B_t g_{i,t},\ 0,0).
\]
The final LayerNorm again rescales this vector by a positive scalar, so the sign of the
new state pair is preserved.

Finally, because \(g_{i,t}\) is a continuous nonzero function on a compact domain, there
exist bounds \(0<\delta_t\le |g_{i,t}|\le \Delta_t<\infty\). The normalized state-pair
magnitude after the final LayerNorm is a continuous function of \(|g_{i,t}|\)
on \([\delta_t,\Delta_t]\). As \(B_t\to\infty\), this normalized magnitude converges
uniformly to a positive constant, so by choosing \(B_t\) sufficiently large we can make
the interval \([m_t,M_t]\) satisfy \(M_t/m_t<2\). This closes the induction.

After block \(T\), the state-pair sign corresponds to \(s(u_i^{(T)})\). A final linear readout
that takes the difference of the two state coordinates yields
\[
\ell_i(x)>0 \iff u_i^{(T)}=1.
\]
\end{proof}

\begin{remark}[Discussion]
Theorem~\ref{thm:any_local_rule_single_block} shows that a standard transformer block
already contains a complete local beyond-interpolation primitive: attention performs
smooth neighbour routing, while the FFN performs the non-affine rule computation.
Proposition~\ref{prop:compositional_local_rules_depth} is a broader capacity result,
showing that such primitives can be stacked across layers. This is not a claim that
every trained model realizes this exact circuit; it is an architectural existence
statement about what standard transformer components can do.
\end{remark}

\section{Circuit Analysis Details}
\label{app:circuits}

\textbf{Fitted polynomial coefficients.} For two independently trained Rule~150 models achieving 100\% holdout accuracy:

\begin{table}[htbp]
\centering
\small
\begin{tabular}{lrrrrrrrr}
\toprule
Model & $c_0$ & $c_L$ & $c_C$ & $c_R$ & $c_{LC}$ & $c_{LR}$ & $c_{CR}$ & $c_{LCR}$ \\
\midrule
Seed 4444 & $-5.89$ & $11.82$ & $7.13$ & $11.63$ & $-19.28$ & $-23.60$ & $-18.91$ & $42.88$ \\
Seed 456 & $-6.50$ & $12.59$ & $8.26$ & $12.91$ & $-20.39$ & $-25.27$ & $-21.11$ & $46.14$ \\
\bottomrule
\end{tabular}
\end{table}

\textbf{Layer ablation.}

\begin{table}[htbp]
\centering
\small
\begin{tabular}{lcc}
\toprule
Layer Ablated & Seed 4444 & Seed 456 \\
\midrule
Layer 0 & 20.5\% (79.5\% drop) & 13.1\% (86.9\% drop) \\
Layer 1 & 0.0\% (100\% drop) & 0.0\% (100\% drop) \\
\bottomrule
\end{tabular}
\end{table}

\textbf{Logit lens.}

\begin{table}[htbp]
\centering
\small
\begin{tabular}{lcc}
\toprule
Layer & Seed 4444 & Seed 456 \\
\midrule
Embedding & 1.2\% & 2.7\% \\
After layer 0 & 0.0\% & 0.0\% \\
After layer 1 & 100.0\% & 100.0\% \\
\bottomrule
\end{tabular}
\end{table}

\textbf{Linear probing.} Linear classifiers decoding individual bit values and XOR from each layer's activations:

\begin{table}[htbp]
\centering
\small
\begin{tabular}{lcccc}
\toprule
Layer & Left & Centre & Right & XOR \\
\midrule
Embedding & 51.6\% & 100.0\% & 51.5\% & 51.6\% \\
After layer 0 & 75.4\% & 100.0\% & 75.3\% & 56.8\% \\
After layer 1 & 76.8\% & 99.2\% & 75.8\% & 98.4\% \\
\bottomrule
\end{tabular}
\end{table}

\textbf{Activation patching.} Corrupt inputs by flipping neighbour bits (accuracy $\to \sim$6\%), then patch clean activations:

\begin{table}[htbp]
\centering
\small
\begin{tabular}{lcc}
\toprule
Patch Point & Seed 4444 Recovery & Seed 456 Recovery \\
\midrule
Embedding & 98.6\% & 98.6\% \\
After layer 0 & 98.6\% & 100.0\% \\
After layer 1 & 100.0\% & 100.0\% \\
\bottomrule
\end{tabular}
\end{table}

\textbf{Parity neurons.} In layer~1's FFN, top parity neurons show activation differences up to 1.0 (neuron~20 in seed~456: mean activation 1.00 for XOR=1, 0.00 for XOR=0).

\section{Temporal Interpolation Baseline}
\label{app:temporal}

Random Forests and KNN given the same input as the transformer (full $t=0$ state, 101 values, plus target position and timestep). Trained on all visible cells at $t=1$--$t=4$. Tested on hidden pattern positions at $t=1$.

\begin{table}[htbp]
\centering
\small
\begin{tabular}{lcc}
\toprule
Method & Hidden Pattern [2] & Hidden Pattern [5] \\
\midrule
Random Forest (500 trees) & 0.2\% & 0.4\% \\
Random Forest (100 trees) & 8.4\% & 11.8\% \\
KNN ($k$=1) & 42.5\% & 42.1\% \\
KNN ($k$=5) & 36.2\% & 36.1\% \\
\bottomrule
\end{tabular}
\end{table}

\section{$k$-Sweep Detailed Results}
\label{app:ksweep}

\begin{figure}[htbp]
  \centering
  \includegraphics[width=0.9\textwidth]{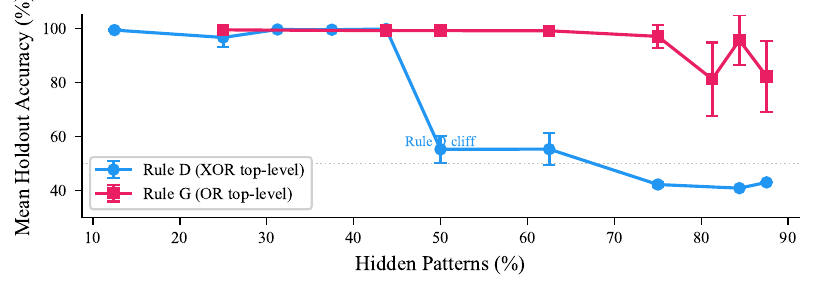}
  \vspace{-0.5em}
  \caption{Mean holdout accuracy vs.\ fraction of hidden patterns for Rule~D and Rule~G. Rule~D shows a sharp cliff at $\sim$50\% hidden; Rule~G degrades gradually, tolerating 88\% hidden. The frontier depends on intrinsic rule complexity.}
  \label{fig:k_sweep}
\end{figure}

\begin{table}[htbp]
\centering
\caption{Rule~D $k$-sweep (soft unrolling, 50 epochs, 10 seeds each).}
\small
\begin{tabular}{cccc}
\toprule
$k$ & Hidden \% & Mean Accuracy & Successes \\
\midrule
4 & 12\% & 99.4\% & 10/10 \\
8 & 25\% & 96.7\% & 10/10 \\
10 & 31\% & 99.6\% & 10/10 \\
12 & 38\% & 99.6\% & 10/10 \\
14 & 44\% & 99.8\% & 10/10 \\
16 & 50\% & 55.3\% & 0/10 \\
20 & 62\% & 55.3\% & 0/10 \\
24 & 75\% & 42.2\% & 0/10 \\
28 & 88\% & 43.0\% & 0/10 \\
\bottomrule
\end{tabular}
\end{table}

\begin{table}[htbp]
\centering
\caption{Rule~G $k$-sweep (soft unrolling, 50 epochs, 10 seeds each).}
\small
\begin{tabular}{cccc}
\toprule
$k$ & Hidden \% & Mean Accuracy & Successes \\
\midrule
14 & 44\% & 99.3\% & 10/10 \\
16 & 50\% & 99.2\% & 10/10 \\
20 & 62\% & 99.2\% & 10/10 \\
24 & 75\% & 97.1\% & 10/10 \\
26 & 81\% & 81.3\% & 7/10 \\
27 & 84\% & 95.7\% & 9/10 \\
28 & 88\% & 82.3\% & 8/10 \\
\bottomrule
\end{tabular}
\end{table}

\subsection{Cross-rule comparison}
\label{app:crossrule}

Table~\ref{tab:crossrule} reports holdout accuracy for the radius-2 rules; Table~\ref{tab:crossrule_r1} gives a per-pattern breakdown for the three radius-1 rules. All experiments use soft unrolling with masking at all timesteps and hide $k$=1 of 8 patterns.

\begin{table}[htbp]
\centering
\caption{Radius-2 rules ($k$=8 of 32 hidden). All use soft unrolling, 50 epochs.}
\label{tab:crossrule}
\small
\begin{tabular}{llccl}
\toprule
Rule & Formula & Seeds & Mean Acc.\ & Successes \\
\midrule
Rule D & $L_1 \oplus ((C \lor R_1) \land (L_2 \lor R_2))$ & 10 & 96.7\% & 10/10 \\
Rule G & $(L_1 \oplus (C \lor R_1)) \lor (L_2 \land R_2)$ & 10 & 99.5\% & 10/10 \\
\bottomrule
\end{tabular}
\end{table}

\begin{table}[htbp]
\centering
\caption{Per-pattern holdout accuracy for radius-1 rules ($k$=1 of 8 hidden, 10 seeds each). Cells show seeds achieving ${\geq}70\%$ holdout accuracy out of 10. Rule~150 (220\,ep) excludes patterns~2 and~5, which partially succeeded at 50 epochs and were not re-run.}
\label{tab:crossrule_r1}
\small
\begin{tabular}{c ccc c}
\toprule
 & \multicolumn{3}{c}{50 epochs} & 220 epochs \\
\cmidrule(lr){2-4} \cmidrule(lr){5-5}
Pattern & Rule 150 & Rule 30 & Rule 106 & Rule 150 \\
\midrule
0 & 0/10 & 1/10 & \textbf{10/10} & 3/10 \\
1 & 0/10 & 5/10 & \textbf{10/10} & \textbf{10/10} \\
2 & 4/10 & 6/10 & 0/10 & --- \\
3 & 0/10 & \textbf{10/10} & 8/10 & \textbf{10/10} \\
4 & 0/10 & 0/10 & 9/10 & \textbf{10/10} \\
5 & 4/10 & 2/10 & \textbf{10/10} & --- \\
6 & 0/10 & \textbf{10/10} & 4/10 & \textbf{10/10} \\
7 & 0/10 & 7/10 & 4/10 & 4/10 \\
\midrule
Overall & 8/80 & 41/80 & 55/80 & 47/60 \\
\bottomrule
\end{tabular}
\end{table}

The per-pattern view reveals that success is pattern-specific, not rule-specific. At 50 epochs, Rule~30 and Rule~106 each have patterns that succeed on every seed (e.g.\ Rule~30 patterns~3 and~6 at 10/10; Rule~106 patterns~0, 1, and~5 at 10/10), while Rule~150 has none. Extended training (220 epochs) allows Rule~150 to reach 47/60; patterns that showed no signal at 50 epochs emerge with longer optimisation.

\begin{table}[htbp]
\centering
\caption{Rule~30 with hard/STE unrolling (50 epochs, 10 seeds per pattern).
Successes are seeds achieving ${\geq}70\%$ holdout accuracy. The 3-NN column
states whether a 3-nearest-neighbour majority vote over the other seven
patterns predicts the held-out pattern's label correctly; for the six
patterns where it is wrong, recovery cannot be attributed to
similarity-based interpolation.}
\label{tab:rule30_hard}
\small
\begin{tabular}{cccc}
\toprule
Pattern & 3-NN vote & Successes & Best holdout \\
\midrule
0 & wrong   & 0/10 & 0.0\%   \\
1 & wrong   & 2/10 & 100.0\% \\
2 & wrong   & 4/10 & 100.0\% \\
3 & correct & 5/10 & 100.0\% \\
4 & wrong   & 0/10 & 36.2\%  \\
5 & wrong   & 2/10 & 89.4\%  \\
6 & wrong   & 2/10 & 88.6\%  \\
7 & correct & 1/10 & 91.0\%  \\
\midrule
Wrong-vote subtotal & & 10/60 & \\
Overall & & 16/80 & \\
\bottomrule
\end{tabular}
\end{table}

Recovery under hard/STE unrolling is far less reliable than under soft
unrolling, but it does occur with no soft feedback anywhere in training: on
the six patterns whose nearest-neighbour vote is wrong, 10 of 60 seeds
succeed, several reaching 100\%. Soft unrolling improves the reliability of
recovery; it is not a prerequisite for it.

The bimodal, pattern-dependent success rates suggest that learnability depends on how well each hidden pattern is constrained by its seven visible neighbours in the specific rule's truth table, not on a single global property of the rule. Some patterns are fully determined by indirect constraints and learned reliably; others are underdetermined and rarely or never recovered. The $k$-sweep (Appendix~\ref{app:ksweep}) shows the complementary effect: hiding more patterns simultaneously removes constraints, eventually crossing a threshold where no pattern can be recovered.

\section{GF(2) Constraint Solver Results}
\label{app:gf2}

\begin{table}[htbp]
\centering
\small
\begin{tabular}{lcccc}
\toprule
Timesteps & Identifiability & Mean Unknowns & Mean Constraints & Mean Rank \\
\midrule
1 & 0\% & 12.5 & 0.0 & 0.0 \\
2 & 100\% & 12.5 & 31.4 & 12.5 \\
3 & 100\% & 12.5 & 58.8 & 12.5 \\
4 & 100\% & 12.5 & 101.2 & 12.5 \\
\bottomrule
\end{tabular}
\end{table}

\section{Symbolic Operator Benchmark Details}
\label{app:symbolic}

\subsection{Task and architecture}

The task uses compositional chains of two binary operators over 6-bit integers $[0, 63]$. Seven operators are defined: XOR ($\oplus$), OR ($\lor$), AND ($\land$), NOR ($\overline{\lor}$), NAND ($\overline{\land}$), LSHIFT ($\ll_1$, left-shift by 1, masked to 6 bits), and RSHIFT ($\gg_1$, right-shift by 1). Given five input integers $(a,b,c,d,u)$ satisfying $(a \;\text{op}_1\; b) \;\text{op}_2\; c = d$, the pair $(c,d)$ helps identify op$_2$. The model predicts the derivation $e = a \;\text{op}_1\; b$, $f = e \;\text{op}_2\; u$, and the operator identities op$_1$ and op$_2$.

There are $7 \times 7 = 49$ possible operator pairs. One pair is held out entirely from training; the remaining 48 are seen. Training uses balanced marginal sampling (equal per-slot operator frequency) to eliminate unigram shortcuts. Each seen pair contributes ${\sim}3{,}000$ training examples (${\sim}144{,}000$ total) and 500 test examples.

The architecture is an encoder-decoder transformer: 2 encoder layers, 2 decoder layers, 4 attention heads, embedding dimension 64, FFN dimension 128 (${\sim}180$K parameters). The encoder receives 5 integer tokens; the decoder autoregressively produces the derivation sequence.

Three training conditions are tested:
\begin{itemize}
\item \textbf{Full derivation (familiar symbols):} The decoder outputs intermediate values and operator labels using standard symbols ($\oplus$, $\lor$, etc.).
\item \textbf{Full derivation (opaque symbols):} Same structure but operator tokens are replaced with arbitrary letters (e.g., A, B, C\ldots), testing whether generalisation depends on pre-existing symbol meaning.
\item \textbf{Label-only:} The decoder outputs only operator labels, no intermediate computation values, testing whether multi-step structure matters.
\end{itemize}

All 49 holdout pairs are tested (Figure~\ref{fig:coverage}); compact labels use \texttt{\^{}}=XOR, \texttt{|}=OR, \texttt{\&}=AND, \texttt{r}=NOR, \texttt{d}=NAND, \texttt{L}=LSHIFT, and \texttt{R}=RSHIFT. Four pairs are selected for detailed analysis spanning structural diversity: \texttt{\^{}|} (XOR$\to$OR), \texttt{\&L} (AND$\to$LSHIFT), \texttt{R\^{}} (RSHIFT$\to$XOR), and \texttt{d|} (NAND$\to$OR).

\subsection{Complete results}

\begin{table}[htbp]
\centering
\caption{Test-set holdout accuracy (\%) per seed across all conditions. Each cell reports accuracy on a held-out test set, evaluated at the best checkpoint (selected by peak eval-set performance).}
\label{tab:symbolic_full}
\small
\begin{tabular}{llccc}
\toprule
Holdout & Seed & Full (familiar) & Full (opaque) & Label-only \\
\midrule
\texttt{\^{}|} & 42  & 70.4 & 66.0 & 28.0 \\
               & 123 & 74.0 & 78.2 & 29.6 \\
               & 456 & 75.2 & 74.8 & 29.6 \\
               & \textbf{Mean} & \textbf{73.2 $\pm$ 2.0} & \textbf{73.0 $\pm$ 5.1} & \textbf{29.1 $\pm$ 0.8} \\
\midrule
\texttt{\&L}   & 42  & 50.8 & 67.6 & 13.0 \\
               & 123 & 60.2 & 66.8 & 18.0 \\
               & 456 & 54.8 & 63.2 & 12.6 \\
               & \textbf{Mean} & \textbf{55.3 $\pm$ 3.9} & \textbf{65.9 $\pm$ 1.9} & \textbf{14.5 $\pm$ 2.5} \\
\midrule
\texttt{R\^{}} & 42  & 30.4 & 23.2 & 8.0 \\
               & 123 & 35.0 & 33.8 & 8.8 \\
               & 456 & 19.2 & 16.8 & 1.6 \\
               & \textbf{Mean} & \textbf{28.2 $\pm$ 6.6} & \textbf{24.6 $\pm$ 7.0} & \textbf{6.1 $\pm$ 3.2} \\
\midrule
\texttt{d|}    & 42  & 37.0 & 36.4 & 19.4 \\
               & 123 & 47.6 & 63.8 & 39.8 \\
               & 456 & 78.6 & 68.8 & 40.6 \\
               & \textbf{Mean} & \textbf{54.4 $\pm$ 17.7} & \textbf{56.3 $\pm$ 14.2} & \textbf{33.3 $\pm$ 9.8} \\
\bottomrule
\end{tabular}
\end{table}

\subsection{Baselines}

KNN, MLP, and KRR operate on the raw numeric input $(a,b,c,d,u)$. The Oracle baseline enumerates all operator pairs using the true operator tables; Learned Tables first estimate each operator truth table from the seen training chains, then enumerate pairs using those learned tables.

\begin{table}[htbp]
\centering
\caption{Baseline holdout accuracy (\%) for seven representative holdouts. All 49 pairs were evaluated: KNN and MLP achieve exactly 0\% on every holdout; KRR ranges from 0\% to 18.2\% (mean 4.3\%), highest when XOR is the first operator.}
\label{tab:symbolic_baselines}
\small
\begin{tabular}{lcccccc}
\toprule
Holdout & Transformer & KNN & MLP & KRR & Oracle & Learned Tables \\
\midrule
\texttt{\^{}|}  & 73.2 & 0 & 0 & 13.8 & 100 & 100 \\
\texttt{\^{}r}  & 53.4 & 0 & 0 & 11.8 & 100 & 99.6 \\
\texttt{\&L}    & 55.3 & 0 & 0 & 0.2  & 100 & 99.2 \\
\texttt{d|}     & 54.4 & 0 & 0 & 3.2  & 100 & 99.8 \\
\texttt{R\^{}}  & 28.2 & 0 & 0 & 7.2  & 100 & 99.8 \\
\texttt{|\&}    & 12.3 & 0 & 0 & 0.4  & 100 & 100 \\
\texttt{|r}     & 11.6 & 0 & 0 & 0.2  & 100 & 99.2 \\
\bottomrule
\end{tabular}
\end{table}

\subsection{Mechanistic interpretability}

Mechanistic analysis was performed on the seed-42 full-familiar model for each holdout. All results below are from those four models.

\textbf{Position-specific probes.} Linear probes trained on encoder layer~1 representations decode operator identity from specific positions. Op$_1$ is best decoded from positions 0--1 (inputs $a$, $b$), reaching 49--71\% accuracy across holdouts. Op$_2$ is best decoded among the probed positions from position~3 (input $d$), reaching 83--85\%. These probes are descriptive; the corruption test below provides the causal evidence, including for position~2 ($c$).

\begin{table}[htbp]
\centering
\caption{Encoder layer~1 probe accuracy (\%) for operator identity by position.}
\label{tab:probes}
\small
\begin{tabular}{lcccc}
\toprule
 & \multicolumn{2}{c}{Op$_1$ accuracy} & \multicolumn{2}{c}{Op$_2$ accuracy} \\
Holdout & Pos 0 ($a$) & Pos 1 ($b$) & Pos 3 ($d$) & Pos 4 ($u$) \\
\midrule
\texttt{\^{}|} & 63.9 & 61.7 & 83.8 & 56.5 \\
\texttt{\&L}   & 49.2 & 52.3 & 85.0 & 46.9 \\
\texttt{R\^{}} & 70.8 & 67.6 & 84.7 & 37.3 \\
\texttt{d|}    & 58.4 & 53.3 & 84.8 & 44.9 \\
\bottomrule
\end{tabular}
\end{table}

\textbf{Cross-attention maps.} Decoder cross-attention at layer~0 concentrates on selected encoder positions; we treat this as descriptive and rely on the corruption tests below for causal evidence:

\begin{table}[htbp]
\centering
\caption{Cross-attention weights (layer~0 average across heads) on selected encoder positions.}
\label{tab:crossattn}
\small
\begin{tabular}{lcccc}
\toprule
Predicting & \texttt{\^{}|} & \texttt{\&L} & \texttt{R\^{}} & \texttt{d|} \\
\midrule
$e$: attention to $c$ & 84.9\% & 57.2\% & 84.0\% & 91.2\% \\
$f$: attention to $c$ & 74.0\% & 62.3\% & 68.4\% & 77.6\% \\
$f$: attention to $u$ & 24.8\% & 34.9\% & 27.6\% & 19.6\% \\
\bottomrule
\end{tabular}
\end{table}

\textbf{Input corruption (causal test).} Replacing encoder inputs at positions $a$, $b$ changes op$_1$ predictions 62--74\% of the time; replacing $c$, $d$ changes op$_2$ 69--79\%. Replacing $u$ (used only as the fresh operand in the generated second sub-computation) changes op$_1$ $\leq$3\% and op$_2$ $<$2\%, a clean negative control confirming the model has learned causal structure.

\begin{table}[htbp]
\centering
\caption{Input corruption: fraction of predictions changed when each encoder position is replaced with a random value.}
\label{tab:corruption}
\small
\begin{tabular}{lcccc}
\toprule
Corrupted position & \texttt{\^{}|} & \texttt{\&L} & \texttt{R\^{}} & \texttt{d|} \\
\midrule
$a$ $\to$ op$_1$ changed  & 73.0\% & 68.0\% & 62.0\% & 73.7\% \\
$b$ $\to$ op$_1$ changed  & 73.7\% & 73.3\% & 67.0\% & 73.7\% \\
$c$ $\to$ op$_2$ changed  & 70.7\% & 69.7\% & 69.3\% & 70.0\% \\
$d$ $\to$ op$_2$ changed  & 77.0\% & 78.7\% & 75.3\% & 78.3\% \\
$u$ $\to$ op$_1$ changed  & 3.0\%  & 1.0\%  & 1.3\%  & 1.0\%  \\
$u$ $\to$ op$_2$ changed  & 1.7\%  & 0.7\%  & 0.7\%  & 1.0\%  \\
\bottomrule
\end{tabular}
\end{table}

\textbf{Logit lens.} The decoder shows a sharp two-stage computation matching the CA pattern. We project the decoder-input positions that predict the first derivation operator tokens op$_1$ and op$_2$.

\begin{table}[htbp]
\centering
\caption{Logit lens: accuracy (\%) when projecting intermediate decoder representations to output vocabulary at the positions predicting op$_1$ and op$_2$.}
\label{tab:logitlens}
\small
\begin{tabular}{llcccc}
\toprule
Layer & Target & \texttt{\^{}|} & \texttt{\&L} & \texttt{R\^{}} & \texttt{d|} \\
\midrule
Embedding   & op$_2$ & 6.6  & 4.6  & 2.4  & 10.4 \\
After dec-0 & op$_2$ & 97.5 & 87.2 & 82.0 & 89.6 \\
After dec-1 & op$_2$ & 100  & 100  & 99.9 & 100  \\
\midrule
Embedding   & op$_1$ & 0.0  & 3.9  & 1.7  & 0.6 \\
After dec-0 & op$_1$ & 61.8 & 47.9 & 54.3 & 60.1 \\
After dec-1 & op$_1$ & 93.1 & 91.6 & 94.9 & 93.6 \\
\bottomrule
\end{tabular}
\end{table}

\textbf{Head-level ablation.} For the \texttt{\^{}|} holdout, ablating individual attention heads reveals generalisation-specific effects. Ablating \texttt{dec\_cross\_0\_head\_0} drops holdout accuracy from 69\% to 47\% (22pp) while seen accuracy drops only 1.7pp. This head is disproportionately important for generalisation, consistent with the CA finding that specific circuit components are essential for computing the held-out function.

\subsection{Shortcut gradient}

Seen-chain accuracy is 88--94\% for every holdout pair; the model learns the visible task equally well in all cases. The variation is entirely in holdout accuracy, reflecting how easily each held-out composition can be shortcutted. Four case-study holdout pairs illustrate: \texttt{\^{}|} (73.2\%) $>$ \texttt{\&L} (55.3\%) $>$ \texttt{d|}  (54.4\%) $>$ \texttt{R\^{}} (28.2\%). \texttt{\^{}|} (XOR-then-OR) involves two logic operators that share dense algebraic constraints with many seen chains, providing strong indirect signal. \texttt{R\^{}} (RSHIFT-then-XOR) is hardest of the four because RSHIFT is a lossy, many-to-one mapping: right-shifting discards the least significant bit, so multiple inputs map to the same output. The model finds approximate representations that satisfy seen-chain constraints without encoding the precise holdout operator---a shortcut that is sufficient for the visible loss but insufficient for the held-out pair. This is a prediction of the constraint-density account: lossy operators reduce indirect signal and permit shortcut solutions that satisfy visible constraints without uniquely determining the holdout operator. The transformer still exceeds all interpolation baselines at 28.2\% (vs KRR 7.2\%), but the lower accuracy reflects insufficient constraints in the visible data to force true generalisation rather than architectural failure. This mirrors the CA finding that intrinsic rule complexity determines the generalisation frontier.

The full coverage study (Figure~\ref{fig:coverage}) tests all 49 holdout pairs with baselines evaluated on every pair. The transformer exceeds interpolation baselines on all 49 (mean 41.8\% vs KRR mean 4.3\%; KNN and MLP score 0\% everywhere). Per-holdout means range from 9.2\% to 73.2\%, with the hardest pairs involving closely related operators (e.g.\ NAND--NOR, OR--AND) whose similar truth tables reduce the distinguishing signal available from seen chains.

\begin{figure}[htbp]
  \centering
  \includegraphics[height=0.9419\textheight,keepaspectratio]{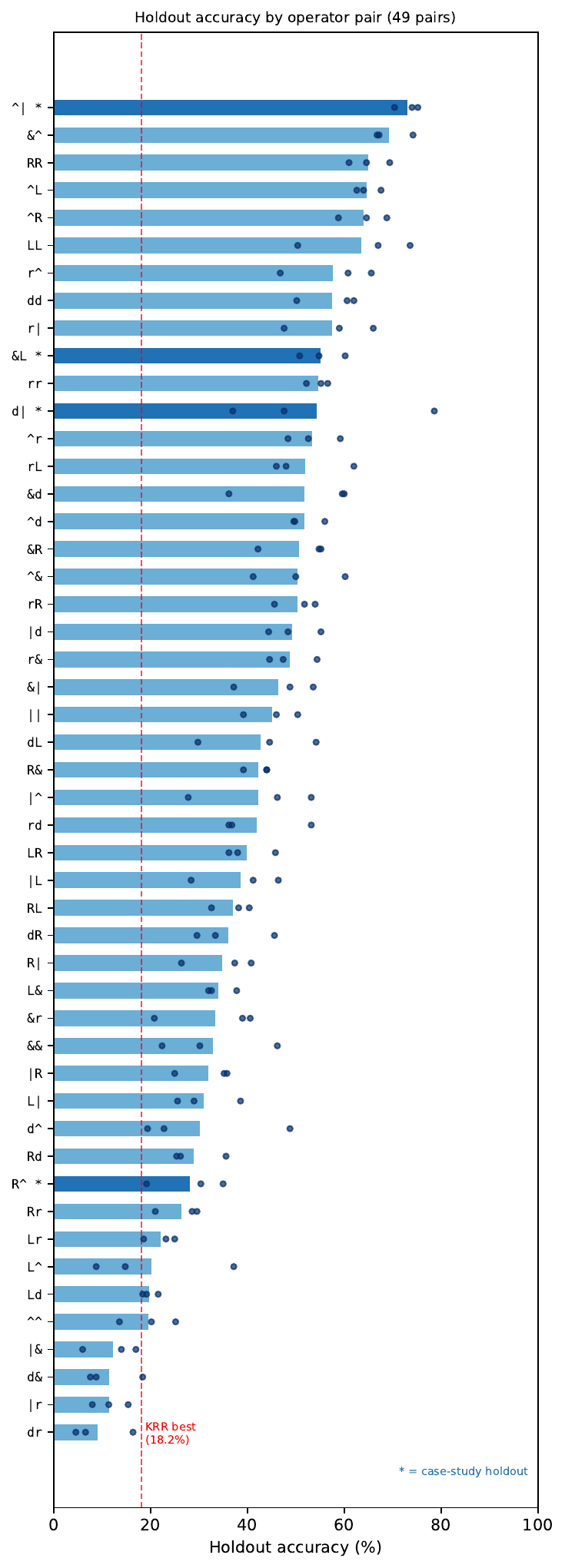}
  \vspace{-0.5em}
  \caption{Holdout accuracy across all 49 operator pairs (3 seeds each, dots show individual seeds). Dashed line: best interpolation baseline (KRR, 18.2\%). Dark bars: four case-study holdouts from Table~\ref{tab:symbolic}.}
  \label{fig:coverage}
\end{figure}

\section{Relationship to Grokking}
\label{app:grokking}

\begin{figure}[htbp]
  \centering
  \includegraphics[width=0.9\textwidth]{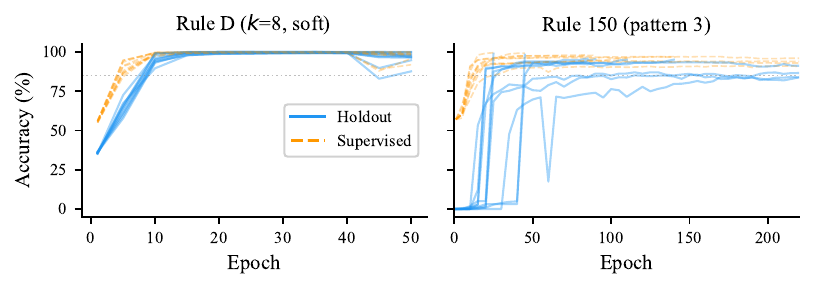}
  \vspace{-0.5em}
  \caption{Training dynamics for Rule~D (left) and Rule~150 pattern~3 (right). Each line is one seed (10 per panel). Holdout accuracy (blue) emerges rapidly once supervised accuracy (orange) exceeds $\sim$85\%. For pure XOR (Rule~150), holdout stays near 0\% until this threshold; Rule~D, which has internal structure permitting partial interpolation, can rise earlier. Rule~150 ($k$=1, radius~1) requires more epochs and fewer seeds converge, consistent with sparser constraints producing weaker indirect signal.}
  \label{fig:phase}
\end{figure}

Our phase transition resembles grokking \citep{power2022grokking}, but differs in a key respect: the model never memorises the hidden outputs because they are never provided. The delay reflects a constraint propagation threshold: until the model's visible-pattern predictions are accurate enough, wrong hidden-pattern predictions do not produce detectable errors at downstream visible positions. The bimodal outcome (each seed either fully succeeds or fully fails) is consistent with the lottery ticket hypothesis \citep{frankle2018lottery} and initialization-dependent phase transitions \citep{zhang2024initialization}.

\end{document}